  \providecommand\BibTeX{{%
    \normalfont B\kern-0.5em{\scshape i\kern-0.25em b}\kern-0.8em\TeX}}}
\newcommand{\gsemo}{GSEMO\xspace}
\newcommand{\bcgsemo}{BC-GSEMO\xspace}
\newcommand{\temax}{t_{\mathrm{epoch}}}
\newcommand{\ignore}[1]{}
\newtheorem{theorem}{Theorem}
\newtheorem{lemma}{Lemma}
\newcommand{\oea}{\mbox{${(1 + 1)}$~EA}\xspace}
\newcommand{\LO}{\textsc{Leading\-Ones}\xspace}
\newcommand{\leadingones}{\LO}
\newcommand{\N}{\ensuremath{\mathbb{N}}} 
\DeclareMathOperator{\Geom}{Geom}
\let\originalleft\left
\let\originalright\right
\renewcommand{\left}{\mathopen{}\mathclose\bgroup\originalleft}
\renewcommand{\right}{\aftergroup\egroup\originalright}
\title{A Block-Coordinate Descent EMO Algorithm: Theoretical and Empirical Analysis}
\author{
Benjamin Doerr\\
Laboratoire d’Informatique (LIX),\\
    CNRS, 
    École Polytechnique\\ 
    Institut Polytechnique de Paris\\
    Palaiseau, France\And
Joshua Knowles\\ 
Schlumberger Cambridge Research\\SLB\\
    Cambridgeshire, United Kingdom
\And
  Aneta Neumann\\
  Optimisation and Logistics,\\
 School of Computer and Mathematical Sciences,\\
  The University of Adelaide, Australia\\
  \And
  Frank Neumann\\
  Optimisation and Logistics,\\
 School of Computer and Mathematical Sciences,\\
  The University of Adelaide, Australia\\
}
\begin{document}

\maketitle
\begin{abstract}
We consider whether conditions exist under which block-coordinate descent is asymptotically efficient in evolutionary multi-objective optimization, addressing an open problem. Block-coordinate descent, where an optimization problem is decomposed into $k$ blocks of decision variables and each of the blocks is optimized (with the others fixed) in a sequence, is a technique used in some large-scale optimization problems such as airline scheduling, however its use in multi-objective optimization is less studied. We propose a block-coordinate version of GSEMO and compare its running time to the standard GSEMO algorithm. Theoretical and empirical results on a bi-objective test function, a variant of LOTZ, serve to demonstrate the existence of cases where block-coordinate descent is faster. The result may yield wider insights into this class of algorithms.
\end{abstract}

\keywords{block coordinate decent, evolutionary multi-objective optimization, runtime analysis, theory}

\maketitle

\section{Introduction}

Evolutionary algorithms have been widely used to tackle multi-objective optimization problems~\cite{coello2000updated,corne2003good,DBLP:reference/sp/Deb15}. This includes the use of popular algorithms such as NSGA-II~\cite{DBLP:journals/tec/DebAPM02}, SMS-EMOA~\cite{BeumeNE07} and SPEA2~\cite{zitzler2001spea2} for problems with two or three objectives, and approaches such as NSGA-III~\cite{DebJain2014} and MOEA/D~\cite{QingfuZhang2007} for many-objective problems. 
In terms of theoretical understanding, the area of rigorous runtime analysis~\cite{NeumannW10,AugerD11,Jansen13,ZhouYQ19,DoerrN20} has provided a sound understanding of different algorithmic components~\cite{DBLP:series/sci/HorobaN10} and problem formulations~\cite{FriedrichHHNW10,DBLP:journals/ai/RoostapourNNF22}, and more recently the overall performance of popular algorithms such as NSGA-II~\cite{ZhengD23aij,CerfDHKW23}, NSGA-III~\cite{WiethegerD23,NSGA-IIImulti}, MOEA/D~\cite{HuangZCH19,Huang2021,do2023rigorous}, and SMS-EMOA~\cite{BianZLQ23,ZhengD24}.

In this paper, we explore the use of a technique called \emph{block coordinate descent} for tackling large-scale problems in the context of evolutionary multi-objective optimization.
Block-coordinate descent is a technique that has been shown to be particularly useful in the context of classical optimization when the problem can be structured according to different subproblems. While this technique allows for significant speed-ups and better overall results in many situations, it has not been explored until quite recently in the context of classical multi-objective programming~\cite{ DagstuhlWG,beck2013convergence,schobel2017eigenmodel,jager2020blockwise}   or evolutionary multi-objective optimization~\cite{DagstuhlWG}.

We provide a set-up and demonstrate, through a class of example functions, where the block coordinate evolutionary multi-objective approach has a significant benefit over the standard approach.
This benchmark function is a block-wise variant that incorporates the classical \LO problem and building on the single-objective benchmark function studied in \cite{DoerrSW13foga,DBLP:conf/gecco/0001KN0R23}.
Our multi-objective problem consists of $k$ blocks and has a Pareto front of size $2^k$. We examine the time until all such trade-offs have been obtained.
The problem is structured in such a way that blocks with an index $i$ contribute more to the two objective functions than the sum of all blocks with indices $j>i$.
We show that the block-coordinate approach allows for parallel optimization of the different blocks. In contrast, a standard approach tends to destroy the beneficial structure in lower priority blocks during the run when achieving improvements in higher priority blocks.

In order to show this, we analyze the classical \gsemo algorithm and the block-coordinate variant \bcgsemo introduced here on the benchmark functions described.
We use rigorous runtime analysis and provide a lower bound for \gsemo and an upper bound for \bcgsemo, which shows that \bcgsemo is faster in optimizing the problem. The reason for this behavior is that \gsemo, which tackles the whole problem at once, tends to destroy components in different blocks that might later be optimized when the population size of \gsemo has increased significantly. Compared to this, \bcgsemo does not encounter this problem and optimizes the different blocks in parallel while not destroying the solution structure of other blocks.

We also confirm from an experimental perspective the usefulness of the block-coordinate approach in the context of evolutionary multi-objective optimization.
In order to show that the theoretically observed behavior already occurs for relatively small problem sizes, we carry out experimental investigations for a small number of up to $k=4$ blocks. The experiments show a significantly different scaling behavior of \gsemo and \bcgsemo for all settings investigated.

The paper is structured as follows. In Section~\ref{sec2}, we present the basics and introduce our block-coordinate descent approach incorporated into the \gsemo algorithm. Section~\ref{sec3} introduces our class of benchmark problems and points out important structural properties. We carry out the runtime analysis of \gsemo and \bcgsemo for the problem introduced in Section~\ref{sec4} and complement our theoretical findings with an experimental study in Section~\ref{sec5}. Finally, we finish with some discussion and concluding remarks.

\section{Preliminaries and Block coordinate \gsemo}
\label{sec2}
We consider the classical bi-objective task of maximizing two conflicting objective functions given the search space $S=\{0,1\}^n$.
The fitness of a search point $x \in S$ is given as $f=(f_1, f_2) \colon S \rightarrow \mathds{R}^2$.
A solution $x$ (weakly) dominates  a solution $y$ (denoted as $x \succeq y$) iff $f_i(x) \geq f_i(x)$, $1 \leq i \leq 2$. A solution $x$ strongly dominates a solution $y$ (denoted as $x \succ y$) iff $x \succeq y$ and $f_i(x) > f_i(x)$ for at least one $i \in \{1,2\}$ holds. 
A search points $x$ is called Pareto optimal if there is no other search point $y$ with $y \succ x$. 
The notion of dominance translates from the search points to their objective vectors.
The set of all Pareto optimal search points $X^* \subseteq 2^S$ is called the Pareto set and the set of corresponding objective vectors is called the Pareto front~$F$.
The classical goal in multi-objective optimization is to compute for each objective vector of $q\in F$ a solution $x \in X^*$ with $f(x)=q$.

The \gsemo algorithm (see Algorithm~\ref{alg:GSEMO}) is a simple evolutionary multi-objective algorithm that has been frequently studied in the area of runtime analysis~\cite{DBLP:conf/cec/Giel03,giel2006effect,DBLP:journals/tec/BrockhoffFHKNZ09,FriedrichHHNW10,DBLP:conf/cec/DoerrKV13,DoerrZ21aaai,DinotDHW23}. It starts with a population consisting of a single solution chosen uniformly at random. In each iteration, a solution $x$ is chosen uniformly at random from the population $P$, and an offspring $y$ is produced from $x$ by mutation. Mutation is usually standard bit mutation which flips each bit independently with probability $1/n$. The offspring $y$ is accepted and added to the population if it is not strongly dominated by other search points in $P$. If $y$ is added, then all other search points in $P$ that are weakly dominated by $y$ are removed from $P$.

\subsection*{\bcgsemo}
Throughout the paper we consider a problem where we decompose the search points $x=(x_{B_1}, \ldots, x_{B_k})$ into $k$ blocks $x_{B_1}, \ldots, x_{B_k}$. We assume that each block consists of $\ell$ bits and therefore have $n=k \cdot \ell$. 
The \bcgsemo algorithm is similar to \gsemo. However, it applies mutation only to a chosen block. 
As each block has length $\ell$, mutation flips each bit in the chosen block with probability $1/\ell$.
Blocks are mutated consecutively by applying the mutation operator to any given block $\temax$ times before moving on to the next one (see inner repeat-loop). 
The parameter $\temax$ is assuming to be a value significantly smaller than the overall runtime which implies that each block is mutated roughly equally often. It should be noted that this scheme of applying mutation to each block a $k$-th fraction of the time and if so flipping each bit with probability $1/\ell$ 
matches the mutation probability of $1/n$ for \gsemo in terms of how often a particular bit is flipped.

When analyzing \gsemo and \bcgsemo with respect to their runtime behavior, we measure time in terms of the expected number of fitness evaluations to reach a given goal. The expected optimization time refers to the expected number of fitness evaluations until the particular algorithm has obtained an optimal population, i.e. a population containing for each Pareto optimal objective vector a corresponding solution, for the first time.

\begin{algorithm}[!t]
Initialize $x \in \{0,1\}^n$ uniformly at random\;
 $P\leftarrow \{x\}$\;
\Repeat{$\mathit{stop}$}{
Choose $x\in P$ uniformly at random\;
Create $y$ from $x$ by mutation\;
\If{$\nexists\, w \in P: w \succ y$} {
  $P \leftarrow (P \setminus \{z\in P \mid y \succeq z\}) \cup \{y\}$\;
  } }
\caption{Global simple evolutionary multi-objective optimizer (GSEMO)} 
\label{alg:GSEMO}
\end{algorithm}

\begin{algorithm}[t]
Initialize $x \in \{0,1\}^n$ uniformly at random\;
 $P\leftarrow \{x\}$\;
 
$i \leftarrow 0$\;
 \Repeat{stop}{
 $t \leftarrow 0$\;
\Repeat{$\mathit{t\geq \temax}$}{
$t \leftarrow t+1$\;
Choose $x \in P$ uniformly at random\;
Create $y$ from $x$ by block-coordinate mutation on block $x_{i+1}$\;
\If{$\nexists\, w \in P: w \succ y$} {
  $P \leftarrow (P \setminus \{z\in P \mid y \succeq z\}) \cup \{y\}$\;
  } }
  $i \leftarrow (i+1) \mod k$\;
  }
\caption{Block coordinate GSEMO (BC-GSEMO)} \label{alg:GSEMO-bc}
\end{algorithm}

\section{Benchmark Function and Properties}
\label{sec3}\label{sec:benchmark}

We now define the benchmark problem we shall regard in this work, and for which we shall show that a block-coordinate approach is superior to a classic multi-objective approach.

Let $n \in \N$ and $k$ be a divisor of~$n$. Let $\ell := n/k$. For a bitstring $x \in \{0,1\}^n$, we write $x = (x_{B_1}, \ldots, x_{B_k})$, where $x_{B_i} = (x_{(i-1)\ell+1}, \dots, x_{i\ell})$ is the $i$th block of $x$.

For all $z \in \{0,1\}^\ell$, we define 
\[
LO_z(x_{B_i}) =\sum_{i=1}^{\ell} \prod_{j=1}^i (z_i=x_i)
\]
as the number of leading positions in which $x_{B_i}$ agrees with $z$. We note that this function is isomorphic to the classic LeadingOnes problem, first proposed in~\cite{Rudolph97} and then intensively studied in the theory of evolutionary algorithms, see, e.g., \cite{Doerr19tcs}. We shall be interested in two particular target strings. Let $r \le \ell$. Let $z^1 = 1^{\ell}$ and $z^2 = 1^{\ell-r}0^r$.

We are now ready to define the bi-objective problem $f_{nkr}$ on the search space $\{0,1\}^n$. To ease the notation, we suppress the indices $n$, $k$, and $r$ and simply write $f$ whenever there is no risk for confusion. We define $f =(f_1, f_2) \colon \{0,1\}^n \rightarrow \mathds{R}^2$ via
\begin{align*}
f_1(x) &= \sum_{b=1}^k \left((\ell+1)^{2(k-b)+1}  LO_{z^1}(x_{B_b}) + (\ell+1)^{2(k-b)}  LO_{z^2}(x_{B_b})\right),\\
f_2(x) &= \sum_{b=1}^k \left((\ell+1)^{2(k-b)+1}  LO_{z^2}(x_{B_b}) + (\ell+1)^{2(k-b)}  LO_{z^1}(x_{B_b})\right).
\end{align*}
To analyze these functions, let us define the \emph{base functions}
\begin{align*}
  f^B_1(x_B) &=  (\ell+1)  LO_{z^1}(x_{B}) + LO_{z^2}(x_{B}),\\
  f^B_2(x_B) &=  (\ell+1)  LO_{z^2}(x_{B}) + LO_{z^1}(x_{B}),
\end{align*}
for all $x_B \in \{0,1\}^\ell$.
Then for both objectives $m \in [1..2]$, we can write 
\begin{equation}\label{eq:fsumbasic}
f_m(x) = \sum_{b=1}^k (\ell+1)^{2(k-b)} f_m^b(x_{B_b}).    
\end{equation}

We collect some elementary properties of $f$ and its ingredients. Right from the definition, we obtain the following observation on how different bitstrings compare under the $f_m$, $m = 1, 2$. 
\begin{lemma}\label{lem:fbasic}
  Let $x_B, y_B \in \{0,1\}^\ell$ with $x_B \neq y_B$. Let $i \in [1..\ell]$ be the first bit in which $x_B$ and $y_B$ differ, and let $(x_B)_i = 1$ (and thus $(y_B)_i = 0$). Then the following is true.
  \begin{enumerate}
    \item If $i < \ell-r+1$, then $f_1^B(x_B) > f_1^B(y_B)$ and $f_2^B(x_B) > f_2^B(y_B)$.  
    \item If $i = \ell-r+1$, then $f_1^B(x_B) > f_1^B(y_B)$ and $f_2^B(x_B) < f_2^B(y_B)$.  
    \item If $i > \ell-r+1$, then one of the following three cases applies.
    \begin{enumerate}
      \item If $(x_B)_{\ell-r+1} = \dots = (x_B)_{i-1} = 1$ (and similarly for $y_B$), then $f_1^B(x_B) > f_1^B(y_B)$ and $f_2^B(x_B) > f_2^B(y_B)$. 
    \item If $(x_B)_{\ell-r+1} = \dots = (x_B)_{i-1} = 0$ (and similarly for $y_B$), then $f_1^B(x_B) < f_1^B(y_B)$ and $f_2^B(x_B) < f_2^B(y_B)$.
    \item Otherwise, $f_1^B(x_B) = f_1^B(y_B)$ and $f_2^B(x_B) = f_2^B(y_B)$.
    \end{enumerate} 
  \end{enumerate}
  In particular, if $x_B$ and $y_B$ have equal value under one objective function, then also under the other.  
\end{lemma}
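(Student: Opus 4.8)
The plan is to reduce the comparison of $f^B_1$- and $f^B_2$-values to a \emph{lexicographic} comparison of the pair $(LO_{z^1}, LO_{z^2})$, and then to determine that pair from the first disagreement position $i$ together with the shared prefix of $x_B$ and $y_B$.

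The first step is an elementary observation about the base functions. Writing $u := LO_{z^1}(x_B)$ and $v := LO_{z^2}(x_B)$, both of which lie in $\{0,1,\dots,\ell\}$, we have $f^B_1(x_B) = (\ell+1)\,u + v$ and $f^B_2(x_B) = (\ell+1)\,v + u$. Since $0 \le u, v \le \ell < \ell+1$, the value $(\ell+1)u+v$ is a strictly increasing function of the pair $(u,v)$ ordered lexicographically with $u$ the more significant entry, and symmetrically $(\ell+1)v+u$ is strictly increasing in $(v,u)$ with $v$ the more significant entry. Consequently, comparing $x_B$ and $y_B$ under $f^B_1$ is the same as comparing the pairs $(LO_{z^1}, LO_{z^2})$ lexicographically, and under $f^B_2$ the same as comparing $(LO_{z^2}, LO_{z^1})$ lexicographically. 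In particular $f^B_1(x_B)=f^B_1(y_B)$ holds iff $x_B$ and $y_B$ carry the same pair $(LO_{z^1},LO_{z^2})$ iff $f^B_2(x_B)=f^B_2(y_B)$, which already proves the last sentence of the lemma. It then remains only to determine, in each case, the signs of $LO_{z^1}(x_B)-LO_{z^1}(y_B)$ and of $LO_{z^2}(x_B)-LO_{z^2}(y_B)$.

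For this I would use that $LO_{z^1}(x_B)$ is the number of leading ones of $x_B$, while $LO_{z^2}(x_B)$ counts leading ones capped at $\ell-r$ and then continues counting leading zeros, so that the first position where the two targets $z^1$ and $z^2$ disagree is $\ell-r+1$. Since $x_B$ and $y_B$ agree on positions $1,\dots,i-1$ while $x_B$ has a $1$ and $y_B$ a $0$ at position $i$, the three cases of the lemma correspond exactly to the three qualitatively different locations of $i$ relative to $\ell-r+1$. If $i\le\ell-r$, position $i$ lies in the common all-ones stretch of $z^1$ and $z^2$, so (when the shared prefix is all ones, otherwise both counts already coincide) a leading one at position $i$ strictly raises both counts, whence $f^B_1(x_B)>f^B_1(y_B)$ and $f^B_2(x_B)>f^B_2(y_B)$. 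If $i=\ell-r+1$, the target bit is a $1$ for $z^1$ but a $0$ for $z^2$, so $LO_{z^1}$ strictly increases while $LO_{z^2}$ strictly decreases, and the lexicographic reformulation gives $f^B_1(x_B)>f^B_1(y_B)$ but $f^B_2(x_B)<f^B_2(y_B)$. If $i>\ell-r+1$, the bits of the shared prefix in the window $[\ell-r+1 .. i-1]$ decide the comparison: if they are all ones, the $z^2$-match of both strings already ended at $\ell-r+1$ while $x_B$'s leading-ones count runs past position $i$ and $y_B$'s stops at $i-1$, so the situation is as in the first case; if they are all zeros, the leading-ones counts of both strings ended at $\ell-r+1$ and instead the $z^2$-match of $y_B$ runs past $i$, reversing the roles; and if they are genuinely mixed, both counts already terminated at the same place inside the window, so the pairs $(LO_{z^1},LO_{z^2})$ coincide and the objective values are equal.

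The one step that requires real care, and where most of the work sits, is the bookkeeping for $LO_{z^2}$ in the case $i>\ell-r+1$: one must simultaneously track how far the match with the block $1^{\ell-r}$ extends and how far the subsequent match with $0^r$ extends, and only combining this with the behaviour of $LO_{z^1}$ yields the three sub-cases 3(a)--(c). It is also cleanest to note once at the outset that whenever the shared prefix of $x_B$ and $y_B$ contains a zero before the position that decides the comparison, both $LO$ values -- and hence both objective values -- already coincide, so that the strict conclusions of Cases~1, 2 and 3(a)--(b) are to be read as referring to the complementary situation in which the shared prefix is all ones up to that position. Once this is fixed, the remaining case work is mechanical.
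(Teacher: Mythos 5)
Your proof is correct and is essentially the direct verification that the paper has in mind: the paper states Lemma~\ref{lem:fbasic} without proof (``right from the definition''), and your reduction of the $f^B_1$-, $f^B_2$-comparisons to lexicographic comparisons of $(LO_{z^1},LO_{z^2})$ and $(LO_{z^2},LO_{z^1})$, followed by the case distinction on the position of $i$ relative to $\ell-r+1$ and on the window bits, fills in exactly that routine check, including the last sentence of the lemma. You also rightly flag that the strict conclusions of Cases~1, 2 and 3(a)--(b) implicitly require the shared prefix to be all ones up to the decisive position (otherwise all four $LO$-values, and hence both objective values, coincide); this is a small imprecision in the lemma's statement rather than in your argument, and it is harmless because in the lemma's applications (e.g.\ in the proof of Lemma~\ref{lem:popsize}) the compared blocks do have the required leading ones.
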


The exponentially decreasing coefficients of the $f_m^b$ in~\eqref{eq:fsumbasic} allow to extend, in a suitable way, the previous lemma to the full function $f_m$.

\begin{lemma}\label{lem:f}
  Let $m \in \{1,2\}$ and $\overline m = 3-m$ be the other objective. Let $x, y \in \{0,1\}^n$ with $f_m(x) \neq f_m(y)$. Let $b$ be minimal such that $f_m^B(x_{B_b}) \neq f_m^B(y_{B_b})$. Then
  \begin{enumerate}
    \item $f_{\overline m}^B(x_{B_b}) \neq f_{\overline m}^B(y_{B_b})$;
    \item $b$ is minimal subject to $f_{\overline m}^B(x_{B_b}) \neq f_{\overline m}^B(y_{B_b})$;
    \item the function values of $x$ and $y$ compare like the base function values of $x_{B_b}$ and $y_{B_b}$, that is, for all $m' \in \{1,2\}$ we have 
    \begin{itemize}
     \item $f_{m'}(x) < f_{m'}(y)$ if and only if $f_{m'}^B(x_{B_b}) < f_{m'}^B(y_{B_b})$, and 
     \item $f_{m'}(x) > f_{m'}(y)$ if and only if $f_{m'}^B(x_{B_b}) > f_{m'}^B(y_{B_b})$.
    \end{itemize}
  \end{enumerate}
\end{lemma}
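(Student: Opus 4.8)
The plan is to recognize that the exponentially decreasing coefficients $(\ell+1)^{2(k-b)}$ in~\eqref{eq:fsumbasic} turn $f_m$ into a positional (base-$(\ell+1)^2$) encoding of the vector $\bigl(f_m^B(x_{B_1}), \dots, f_m^B(x_{B_k})\bigr)$. The numerical key is that each base function is bounded: since $LO_z(x_B) \in [0..\ell]$ for every $x_B \in \{0,1\}^\ell$, we have $0 \le f_m^B(x_B) \le (\ell+1)\ell + \ell = (\ell+1)^2 - 1$ for $m \in \{1,2\}$. I would first isolate this as a small auxiliary claim, used three times below: \emph{if $c$ is minimal with $f_m^B(x_{B_c}) \neq f_m^B(y_{B_c})$, then $f_m(x) - f_m(y)$ has the same sign as $f_m^B(x_{B_c}) - f_m^B(y_{B_c})$ (and in particular is nonzero).} To prove it, write $f_m(x) - f_m(y) = (\ell+1)^{2(k-c)}\bigl(f_m^B(x_{B_c}) - f_m^B(y_{B_c})\bigr) + \sum_{b=c+1}^k (\ell+1)^{2(k-b)}\bigl(f_m^B(x_{B_b}) - f_m^B(y_{B_b})\bigr)$, using that all blocks $b < c$ cancel. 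The leading term has absolute value at least $(\ell+1)^{2(k-c)}$ because $f_m^B(x_{B_c}) - f_m^B(y_{B_c})$ is a nonzero integer, while the tail satisfies
\[
\left|\sum_{b=c+1}^k (\ell+1)^{2(k-b)}\bigl(f_m^B(x_{B_b}) - f_m^B(y_{B_b})\bigr)\right| \le \bigl((\ell+1)^2 - 1\bigr)\sum_{j=0}^{k-c-1}(\ell+1)^{2j} = (\ell+1)^{2(k-c)} - 1,
\]
which is strictly smaller. Hence the leading term dictates the sign.

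With the auxiliary claim available, the three assertions are short. For item~1, apply the final sentence of Lemma~\ref{lem:fbasic} to the block values $x_{B_b}, y_{B_b}$: since they are unequal under objective $m$, they cannot be equal under objective $\overline m$, so $f_{\overline m}^B(x_{B_b}) \neq f_{\overline m}^B(y_{B_b})$. For item~2, minimality of $b$ under objective $m$ gives $f_m^B(x_{B_{b'}}) = f_m^B(y_{B_{b'}})$ for all $b' < b$, and then the same sentence of Lemma~\ref{lem:fbasic} yields $f_{\overline m}^B(x_{B_{b'}}) = f_{\overline m}^B(y_{B_{b'}})$ for all $b' < b$; together with item~1 this shows $b$ is also the first differing block under $\overline m$.

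For item~3, apply the auxiliary claim twice: once with objective $m$ and $c = b$ (valid by hypothesis), once with objective $\overline m$ and $c = b$ (valid by items~1 and~2). Each application gives $\operatorname{sgn}\bigl(f_{m'}(x) - f_{m'}(y)\bigr) = \operatorname{sgn}\bigl(f_{m'}^B(x_{B_b}) - f_{m'}^B(y_{B_b})\bigr)$, for $m' = m$ and for $m' = \overline m$ respectively, i.e.\ for both $m' \in \{1,2\}$; this is precisely the pair of equivalences claimed. I expect the only genuine content to be the geometric-series estimate in the auxiliary claim, and the one point to be careful about is that the base-function bound is $(\ell+1)^2 - 1$ rather than $(\ell+1)^2$ — it is exactly this strict inequality that makes the tail strictly dominated by the leading block, so that the sign (and non-vanishing) transfers cleanly. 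Everything else is bookkeeping on top of Lemma~\ref{lem:fbasic}.
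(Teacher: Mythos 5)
Your proof is correct and follows essentially the same route as the paper: items 1 and 2 from the final clause of Lemma~\ref{lem:fbasic}, and item 3 via the observation that the first differing block contributes at least $(\ell+1)^{2(k-b)}$ while all later blocks together contribute strictly less (your geometric-series bound $(\ell+1)^{2(k-b)}-1$ is exactly the paper's estimate $\ell\sum_{i=0}^{2(k-b-1)+1}(\ell+1)^i$). Your sign-of-the-leading-digit formulation is a slightly cleaner packaging of the same argument, and applying it once per objective is precisely how the paper concludes item 3.
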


\begin{proof}
  The first two claims follow immediately from the last clause of Lemma~\ref{lem:fbasic}. For the third claim, note that for any $z \in \{0,1\}^n$ the contribution of the all blocks $B_c$, $c > b$, to $f_{m'}$ is at most 
  \begin{align*}
  \sum_{c = b+1}^{k} (\ell+1)^{2(k-c)} f_{m'}^B(z_{B_c}) 
  & \le \sum_{c = b+1}^{k} (\ell+1)^{2(k-c)}((\ell+1) \ell + \ell) \\
  & = \ell \sum_{i=0}^{2(k-b-1)+1} (\ell+1)^i < (\ell+1)^{2(k-b)}.
  \end{align*}
  In contrast, any difference between $f_{m'}^B(x_{B_b})$ and $f_{m'}^B(y_{B_b})$ immediately leads to a contribution to $f_{m'}$ that differs by at least $2^{2(k-b)}$, thus outnumbering any possibly different contributions of the later blocks.  By assumption and the first two claims, the earlier blocks contribute the same to $f_{m'}$. This proves the third claim.
\end{proof}

\begin{lemma}\label{lem:popsize}
  Let $P \subseteq \{0,1\}^n$ be a set of pair-wise non-dominating solutions. Then $|P| \le 2^k$.
\end{lemma}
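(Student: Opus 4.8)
The plan is to reduce the bound on an arbitrary antichain to the corresponding bound on a single block, and then to lift it to all $k$ blocks by induction. First I would observe that ``pair-wise non-dominating'' already forces the solutions in $P$ to have pair-wise distinct objective vectors, since $f(x)=f(y)$ implies $x \succeq y$; hence it suffices to bound the number of distinct objective vectors of $f$ forming an antichain under componentwise $\ge$. The engine for this is Lemma~\ref{lem:f}: the $\succeq$-comparison of two search points with different $f$-value is governed entirely by their first block whose base value differs, and there the two blocks compare exactly as under the base function.

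Second, I would establish the one-block case: any antichain among distinct objective vectors of $(f^B_1,f^B_2)$ has size at most $2$. By Lemma~\ref{lem:fbasic}, two distinct block strings with different objective value are $\succeq$-incomparable precisely in its case~2, that is, exactly when their first differing bit is position $\ell-r+1$. So if $x_B, y_B, w_B$ were pairwise incomparable, each pair would agree on positions $1,\dots,\ell-r$ and differ at position $\ell-r+1$; the second property is impossible for three strings over a binary alphabet at a single coordinate. Hence at most two such strings exist.

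Third, I would run an induction on $k$. For $k=1$ the function $f_{n,1,r}$ is the base function, so the one-block case gives the bound $2=2^1$. For the step, given an antichain $P$ for $f_{n,k,r}$ I would partition $P$ according to the base value $v=(f^B_1(x_{B_1}),f^B_2(x_{B_1}))$ of the first block. By Lemma~\ref{lem:f}, two elements in different parts compare exactly as their first blocks do, so the set of occurring values $v$ is an antichain of the base image and thus has at most $2$ elements: there are at most two nonempty parts. Inside one part $P_v$, the $b=1$ summand of $f$ in~\eqref{eq:fsumbasic} is a fixed constant added to both objectives, so $\succeq$ restricted to $P_v$ coincides with $\succeq$ applied to the tails $(x_{B_2},\dots,x_{B_k})$ under the function $f_{(k-1)\ell,\,k-1,\,r}$ of the same family (reindex the blocks in~\eqref{eq:fsumbasic}). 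These tails are pair-wise non-dominating with distinct objective vectors, so the induction hypothesis gives $|P_v|\le 2^{k-1}$; summing over the at most two parts yields $|P|\le 2\cdot 2^{k-1}=2^k$.

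The part I expect to need the most care is making the recursion legitimate: the ``diagonal'' blocks in which $x_B$ and $y_B$ have tied base value carry no ``which branch'' label, so a naive injection $P\to\{0,1\}^k$ fails, and one must argue that the recursion only ever inspects the first block of genuine disagreement, which is always a branch-1 versus branch-2 situation. The accompanying routine points are the monotonicity of $\succeq$ under adding the same constant to both coordinates, and checking that the reindexed tail function is indeed $f_{(k-1)\ell,k-1,r}$ with the same $\ell$ and $r$, so that the induction hypothesis applies.
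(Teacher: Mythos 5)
Your proof is correct and takes essentially the same route as the paper's: a block-by-block induction in which Lemmas~\ref{lem:fbasic} and~\ref{lem:f} show that among pairwise non-dominating solutions at most two base-value classes can occur in the block under consideration (incomparability with distinct base values forces the first differing bit to lie at position $\ell-r+1$, and three binary strings cannot pairwise differ at that single coordinate), yielding the factor $2$ per block. The differences are only presentational: you peel off block $1$ and recurse on the explicit $(k-1)$-block instance $f_{(k-1)\ell,\,k-1,\,r}$ after cancelling the common first-block contribution, whereas the paper runs the same induction via assertions $(A_i)$ about populations whose leading blocks have equal base values, and your single-coordinate pigeonhole for the ``at most two classes'' step is a slightly streamlined version of the paper's case analysis.
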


\begin{proof}
  By induction, we prove, for all $i \in [0..k]$, the following assertion.
  \begin{description}
  \item[$(A_i)$] Let $P \subseteq \{0,1\}^n$ be any set of pair-wise non-dominating solution such that for all $b \in [1..k-i]$, we have that $f_1^B(x_{B_b}) = f_1^B(y_{B_b})$ for all $x, y \in P$. Then $|P| \le 2^i$.
  \end{description}
  We note that $A_k$ is exactly the claim of the lemma, hence proving $(A_i)$ for all $i \in [0..k]$ also proves the lemma.

  Clearly, we have $(A_0)$. In this case, the assumptions of $(A_i)$ imply that all members of $P$ have the same $f_1$ value. Since $P$ is pair-wise non-dominating, this implies $|P| = 1 = 2^i$. 

  Hence assume that for some $i \in [1..k]$ we have shown already $(A_{i-1})$. We use this to show $(A_i)$. We regard the set $V = \{f_1^B(x_{B_{k-i}}) \mid x \in P\}$ of different objective values in the $(k-i)$-th block. If $|V| = 1$, then $P$ satisfies the assumptions of $(A_{i-1})$ and we immediately conclude $|P| \le 2^{i-1} \le 2^i$. Similarly, if $|V| = 2$, say $V = \{v_1, v_2\}$, then the sets $P_j = \{x \in P \mid f_1^B(x_{B_{k-i}}) = v_j\}$, $j = 1,2$, satisfy the assumptions of $(A_{i-1})$, we conclude $|P_j| \le 2^{i-1}$, and thus $|P| = |P_1| + |P_2| \le 2^i$. 
  
  We conclude the proof by showing that $|V| \ge 3$ contradicts our assumption that $P$ is pair-wise non-dominating. Hence assume $|V| \ge 3$. Assume first that there is an $x \in P$ such that $LO(x_{B_{k-i}}) < \ell-k$. Then all other $y \in P$ satisfy $LO(y_{B_{k-i}}) = LO(x_{B_{k-i}})$ as otherwise the first case of Lemma~\ref{lem:fbasic} would apply and show a contradiction to our assumption that $P$ is pairwise non-dominating. However, if all $y \in P$ have the same value $LO(y_{B_{k-i}})$ that is less than $\ell-k$, then by definition of $f_1^B$ all these $y$ have the same $f_1^B$-value, contradicting our assumption $|V|\ge 3$. Consequently, we have $LO(x_{B_{k-i}}) \ge \ell-k$ for all $x \in P$. Since $|V| \ge 3$,  by the pigeon-hole principle, we now have two different values $u,v \in V$ and $x, y \in P$ such that $f_1^B(x_{B_{k-i}}) = u$, $f_1^B(y_{B_{k-i}}) = v$, and $(x_{B_{k-i}})_{\ell-r+1} = (y_{B_{k-i}})_{\ell-r+1}$. Since $f_1^B(x_{B_{k-i}}) = u \neq v = f_1^B(y_{B_{k-i}})$, the bit strings $x_{B_{k-i}}$ and $y_{B_{k-i}}$ differ in some position higher than $\ell-r+1$. We are thus in the third case of Lemma~\ref{lem:fbasic}. All three subcases of this case lead to a situation in which $(f_1^B(x_{B_{k-i}}),f_2^B(x_{B_{k-i}}))$ and $(f_1^B(y_{B_{k-i}}),f_2^B(y_{B_{k-i}}))$ are not non-dominating. By Lemma~\ref{lem:f}, this property extends to $x$ and $y$, contradicting our assumption on $P$. This terminates the proof.  
\end{proof}

\section{Runtime Analysis}
\label{sec4}

In this section, we prove our mathematical runtime results. We start with a lower bound for the classic GSEMO and continue with a (smaller) upper bound for the BC-GSEMO.

\subsection{Lower Bound for the GSEMO}

We prove the following lower bound on the runtime of the GSEMO.
\begin{theorem}\label{thm:lower}
  Let $n \in \N$ and $k = o(\sqrt{n / \ln(n)}\,)$ such that $k$ divides~$n$. Write $\ell = n/k$. Let $r = o(\ell)$ and $r \ge \max\{8+2k/\log_2(n), \gamma \ln(n)\}$ for some sufficiently large constant $\gamma$. Then the expected optimization time of \gsemo on $f_{nkr}$ is $\Omega(2^k n \ell)$.
\end{theorem}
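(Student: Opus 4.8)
The plan is to prove the $\Omega(2^k n \ell)$ lower bound by a coupon-collector-style argument applied to the $2^k$ Pareto-optimal objective vectors, combined with a structural argument showing that GSEMO repeatedly \emph{destroys} good low-priority blocks and must re-optimize them. The key observation is that the Pareto front has size $2^k$ (by Lemma~\ref{lem:popsize}, a non-dominated set has size at most $2^k$, and one checks that all $2^k$ combinations of the two target strings $z^1, z^2$ in each block are actually Pareto optimal and mutually non-dominating). Since the population never exceeds $2^k$, GSEMO must perform at least one accepted mutation per missing Pareto vector, so a first crude bound is $\Omega(2^k)$ iterations; the real work is to push this to $\Omega(2^k n \ell)$ by accounting for the cost of (re)optimizing blocks.

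\medskip

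\noindent\textbf{Step 1: Block priorities and the "destruction" phenomenon.} First I would make precise the intuition from the introduction. A block $B_b$ with smaller index has coefficient $(\ell+1)^{2(k-b)}$, which by Lemma~\ref{lem:f} dominates all lower-priority blocks combined. Consequently, whenever GSEMO accepts an offspring that strictly improves $f_m^B$ on some block $B_b$, the new solution strongly dominates (with respect to at least one objective, and non-dominatingly otherwise) potentially many old population members differing only in lower-priority blocks; those get deleted. The good sub-structure (large $LO_{z^1}$ or $LO_{z^2}$ values) built up in blocks $B_c$, $c > b$, of the deleted individuals is lost and must be rebuilt in the copies of the new individual. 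I would formalize this as: for each block $B_b$, the number of "improvement events" on $B_b$ that the algorithm must perform before the optimal population is reached is $\Omega(2^k)$ — roughly, once per Pareto vector, because each Pareto vector requires its own block-$B_b$ configuration to be set correctly in a copy, and configurations are destroyed each time a higher-priority block improves.

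\medskip

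\noindent\textbf{Step 2: Cost of a single block optimization.} Next I would lower-bound the expected number of fitness evaluations to fix one block to a prescribed one of the two targets, starting from a typical (near-uniform) state of that block. Because the block behaves like \leadingones on $\ell$ bits and the mutation rate is $1/n$, fixing the leading $\ell - r$ bits alone (the part shared by $z^1$ and $z^2$) takes $\Omega(n\ell)$ expected evaluations in the standard \leadingones analysis; the conditions $r = o(\ell)$ and $r \ge \gamma \ln n$ guarantee the "decisive" tail part is negligible in length but long enough that with high probability it is not solved by a lucky initialization or a single multi-bit flip. So each block optimization costs $\Omega(n\ell)$. Multiplying by $\Omega(2^k)$ block optimizations gives the claimed $\Omega(2^k n \ell)$.

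\medskip

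\noindent\textbf{The main obstacle} is making the "$\Omega(2^k)$ block optimizations" rigorous — i.e., ruling out that GSEMO cleverly amortizes work across Pareto vectors, e.g. by never destroying a well-optimized low-priority block, or by optimizing a block once and then only copying it. I would control this with a drift / potential argument: define a potential counting, over all $2^k$ target combinations $\tau$, whether the population currently contains an individual matching $\tau$ on a sufficiently long prefix of every block; show that (i) each accepted mutation increases this potential by at most $O(1)$ in expectation unless it is itself a costly $\Omega(n\ell)$-type event, and (ii) the potential must reach $\Omega(2^k)$ times $\Omega(1)$-per-vector. The delicate part is that a single mutation on a high-priority block can create a new individual which, combined with later cheap mutations, yields several Pareto vectors; I would argue the low-priority blocks of that new individual are (whp, by the $r \ge 8 + 2k/\log_2 n$ condition) \emph{not} yet in their decisive configurations, so each of the $2^k$ vectors still independently requires $\Omega(n\ell)$ work on its decisive-block parts, and these work requirements cannot overlap because distinct Pareto vectors need distinct individuals with distinct low-priority block contents. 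A careful union bound over the (polynomially many, since $2^k$ is sub-polynomial under $k = o(\sqrt{n/\ln n})$) "bad" events — lucky initializations, lucky large jumps during a block optimization, premature deletion avoidance — closes the argument, and I expect this bookkeeping, rather than any single inequality, to be where the real effort lies.
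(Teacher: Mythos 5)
There is a genuine gap, and it lies exactly where you place the ``main obstacle'': your central counting claim --- that each block must undergo $\Omega(2^k)$ improvement events costing $\Omega(n\ell)$ each, because distinct Pareto vectors ``need distinct individuals with distinct low-priority block contents'' whose work ``cannot overlap'' --- is false for this function. All $2^k$ Pareto optima share the same expensive prefix $1^{\ell-r}$ in \emph{every} block; two Pareto optima differ only in the $r$ tail bits of some blocks, and $r = o(\ell)$. So the $\Omega(n\ell)$-scale work (collecting the $\ell-r$ leading ones of a block) is done once per block and is then inherited by copying: once an individual of type $1^{i\ell-r}0^{r/2}$ or $1^{i\ell}$ is in the population it is never destroyed (it can only be replaced by an individual of the same type), and a new Pareto vector is reached from a neighbouring one by flipping tail bits only, at cost $O(2^k n r) = o(2^k n\ell)$ per vector. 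Hence GSEMO \emph{does} amortize in exactly the way you would need to exclude, and no potential argument of the form you sketch can charge $\Omega(n\ell)$ of non-overlapping work to each of the $2^k$ vectors. Relatedly, your Step 2 gives $\Omega(n\ell)$ per block only while the population is small; if the $k$ sequential block optimizations all happened at population size $O(1)$, the total would be only $\Theta(kn\ell)$, which does not contain the factor $2^k$.

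The paper's proof gets the $2^k$ factor from a different mechanism that your proposal does not touch: (i) a ``regularity'' argument (Lemma~\ref{lem:technical}, in the spirit of the sequential-optimization analysis of \cite{DoerrSW13foga}, but adapted to a population) showing that while the earlier blocks are optimized one after another, all \emph{later} blocks remain far from optimized --- roughly $0.3\ell$ of their bits are still one only with probability at most $0.85$, independently --- and that at each stage there is a unique individual of the current leading type; and (ii) a symmetry argument showing that when the individual of type $1^{(k-1)\ell}$ finally appears, its position among the $|S| = 2^{k-1}$ equivalent prefix types is uniform, so with probability at least $1/2$ the population already has size $\Omega(2^k)$ at that moment. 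The lower bound then comes from a \emph{single} LeadingOnes-type phase: the unique individual of type $1^{(k-1)\ell}$ must be selected (probability $O(2^{-k})$ per iteration) and gain $\Omega(\ell)$ further leading ones in the last block (probability $O(1/n)$ each), giving $\Omega(2^k n \ell)$. If you want to salvage your outline, the missing ingredients are precisely these two: a proof that the last block is still unoptimized when its turn comes (your proposal tacitly assumes a ``typical near-uniform state'' without justification, and accepted improvements in earlier blocks could in principle carry an already-good last block), and a proof that the population is already of size $\Omega(2^k)$ at that time.
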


As for many lower bounds proofs, the proof of this result is quite technical. Our main argument shall be that at some time, we have a population consisting of $\Omega(2^k)$ individuals such that least one of them misses at a constant fraction of ones in the cooperative segment of the last block. Finding these $\Omega(\ell)$ ones takes $\Omega(2^k n \ell)$ time, since a particular bit is flipped with probability $1/n$ and this this particular parent is chosen for mutation only with probability $O(2^{-k})$ (uniform choice among the $\Omega(2^k)$ individuals). The main work is showing that such an individual exists, which means that the later bits are not optimized to a large extend while the earlier bits are optimized. Also, we need to take care that not too many bits are optimized in one go by simply generating such an offspring from a different, already more optimized parent. 

In the following, we shall always assume that the assumptions of Theorem~\ref{thm:lower} hold. Being an asymptotic statement, we can and will always assume that $n$ is sufficiently large. We use the elementary notation from the combinatorics of words, e.g., that $1^i$ denotes the bit string $(1, \dots, 1) \in \{0,1\}^i$, or that for $x \in \{0,1\}^i$ and $y \in \{0,1\}^j$ the expression $xy$ denotes the bit string $(x_1, \dots, x_i, y_1, \dots, y_j) \in \{0,1\}^{i+j}$. We say that some individual $x \in \{0,1\}^n$ is of \emph{type} $t \in \{0,1\}^i, i \le n,$ to express that $x$ agrees with $t$ on its first $i$ elements, that is, $(x_1, \dots, x_i) = t$.

We say that a block $x_B$ of some individual $x$ is \emph{regular} if it has at most $d = \lceil 0.7 \ell \rceil$ leading ones and all bits with index higher than $d+1$ in this block are independently distributed with probability of being one at most $q = 0.85$.

We note that our following proofs use some of the main ideas of the proof of the result that the \oea optimizes a weighted sum of \leadingones blocks in a sequential fashion~\cite{DoerrSW13foga}. Unfortunately, due to the additional challenges imposed by the multi-objective setting, we cannot directly reuse the proofs from there. Differently from there, we need to argue that all later blocks are not optimized, as this ensures that only a single individual of a certain type is in the population; in~\cite{DoerrSW13foga}, it was only argued that the next block is not optimized. Also, having a non-trivial population size, we need to take care that a particular individual does not profit from being overwritten by an offspring from a different parent. For this reason, we have to argue that different individuals, should they exist, are sufficiently different so that it is unlikely that a mutation changes them enough to overwrite another individual. Such problems, naturally, do not appear in the $(1+1)$-setting regarded in~\cite{DoerrSW13foga}. For these reasons, while reusing some broader lines of the proof, we have essentially to come up with a new proof. On the positive side, our proof arguments could easily be used in the setting of~\cite{DoerrSW13foga}, and would there give a proof maybe more accessible than the one given in~\cite{DoerrSW13foga}.

\begin{lemma}\label{lem:technical}
  Consider a run of the GSEMO on our benchmark $f_{nkr}$ with parameters $n$, $k$, and $r$ as in Theorem~\ref{thm:lower}.
  Assume that for some iteration $t_0$ there is a $b \in [0..k-1]$ such that the following holds.
  \begin{itemize}
  \item An individual $x$ with type $1^{b\ell}$ has just entered the population.
  \item All blocks higher than $b$ are regular.
  \item If $b \ge 1$, then for all $i \le b$ an individual of type $1^{i\ell-r} 0^{r/2}$ is contained in the population.
  \end{itemize}
  Let $t_1$ be the first iteration in which an individual $y$ of type $1^{(b+1)\ell}$ is generated (and thus enters the population). Then with probability at least $1 - O(kn^{-2})$, the following properties hold.
  \begin{itemize}
  \item All blocks higher than $b+1$ are regular.
  \item For all $i \le b+1$ an individual of type $1^{i\ell-r} 0^{r/2}$ is contained in the population.
  \end{itemize}
\end{lemma}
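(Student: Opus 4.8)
The plan is to track the evolution of the population between iterations $t_0$ and $t_1$ via a union bound over a handful of bad events, showing each occurs with probability $O(n^{-2})$, and then multiply by the $O(k)$ blocks (or sub-events) involved to get the stated $O(kn^{-2})$ failure probability. First I would set up the timescale: by the argument sketched after Theorem~\ref{thm:lower}, generating the type-$1^{(b+1)\ell}$ individual requires optimizing $\Omega(\ell)$ leading bits of block $B_{b+1}$ towards $z^1$, and since each relevant bit flips with probability $1/n$ and the needed parent is chosen with probability $\Omega(2^{-k})$ (by Lemma~\ref{lem:popsize} the population has size at most $2^k$), the expected length of the window $[t_0,t_1]$ is $\Theta(2^k n \ell)$ and, more importantly, I would establish a high-probability \emph{upper} bound $t_1 - t_0 = O(2^k n \ell \cdot n) = \poly(n)$ on this window using a concentration/Markov argument, so that all subsequent "nothing bad happens in $\poly(n)$ steps" estimates go through.

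Next I would handle the three conclusions in turn. \textbf{(a) Regularity is preserved for blocks $>b+1$.} Fix such a block $B_c$. A block stops being regular either if its number of leading ones exceeds $d = \lceil 0.7\ell\rceil$, or if the bits beyond position $d+1$ stop being independent with one-probability at most $q = 0.85$. The key point is that mutation acts on block $B_{c}$ only a $1/k$ fraction of the time, and when it does, it flips each bit with probability $1/\ell$; so in the $\poly(n)$-step window, block $B_c$ is touched $O(\poly(n)/k)$ times, and the chance that any single mutation raises the leading-ones count above $d$ from a regular state requires flipping $\Omega(\ell)$ specific bits, which has probability $\ell^{-\Omega(\ell)}$ — superpolynomially small. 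I would also need to argue that acceptance in the population cannot covertly push a block past regularity: an offspring $y$ that improves some earlier block dominates (in the relevant coordinate) and replaces its parent, but its block $B_c$ is simply a lightly-mutated copy of a regular block, hence still regular in distribution — here I'd appeal to Lemma~\ref{lem:f} to argue that replacements are driven by the first block of disagreement, so a block $B_c$ far down the line is only ever modified by its own (rare) mutations. \textbf{(b) The type $1^{i\ell-r}0^{r/2}$ individuals survive for $i \le b$.} These already exist at $t_0$ by hypothesis; I need that they are never removed. An individual is removed only if a weakly dominating offspring enters. Using Lemma~\ref{lem:fbasic} case~2 (the first disagreement at position $\ell-r+1$ flips the sign between $f_1$ and $f_2$) together with Lemma~\ref{lem:f}, an individual whose block $B_i$ is of the "$z^1$ up to $\ell-r$, then a run of $0$s" shape sits on a distinct point of the Pareto front determined by that block, so to dominate it one must generate an individual agreeing with it on the first $i\ell-r$ bits and on all earlier blocks but doing at least as well in block $B_i$'s tail under both objectives — and I'd show that the tail pattern $0^{r/2}$ (against target $z^2 = 1^{\ell-r}0^r$) is essentially locally optimal in the relevant sense, or at least that the probability of a single mutation producing a dominating variant is $O(n^{-2})$ per step, summed over the $\poly(n)$ window.

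\textbf{(c) A type $1^{(b+1)\ell-r}0^{r/2}$ individual appears by $t_1$.} This is where most of the real work lies, and I expect it to be the main obstacle. The idea is that while the prefix $1^{(b+1)\ell}$ is being assembled in block $B_{b+1}$, at some intermediate iteration the population contains (or creates) an individual agreeing with $z^1$ on the first $\ell-r$ bits of $B_{b+1}$ but \emph{not} yet on position $\ell-r+1$; from such an individual, a single well-chosen mutation on block $B_{b+1}$ — flipping the $O(r)$ bits in the cooperative-then-zero segment appropriately, which has probability $\ell^{-\Theta(r)}$, nonnegligible because $r = \gamma\ln n$ makes this $n^{-\Theta(\gamma)}$ — wait, that is too small; instead I would argue the $0^{r/2}$ tail arises naturally because block $B_{b+1}$ was regular (bits beyond $d+1$ are $0$ with probability $\ge 0.15$ independently, so a run of $r/2$ zeros among $\Theta(\ell)$ such bits appears with good probability, using $r = o(\ell)$), and the leading $1^{\ell-r}$ part is built by the same leading-ones dynamics that eventually yields $1^{(b+1)\ell}$, just observed at the earlier moment when exactly $\ell-r$ leading ones are in place. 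Making this rigorous requires a careful coupling/first-moment argument on the trajectory of block $B_{b+1}$'s leading-ones value and a proof that the algorithm does not "skip over" the $1^{\ell-r}0^{r/2}$ configuration — i.e. that with probability $1 - O(n^{-2})$ there is a step at which an individual of exactly that type is generated before the full $1^{(b+1)\ell}$ prefix is completed. The hardest technical point will be ruling out the "overwriting from a different parent" pathology: showing that distinct individuals in the population are sufficiently far apart in Hamming distance on the active block that no single mutation simultaneously repairs one into a dominating copy of the needed $1^{(b+1)\ell-r}0^{r/2}$ type via a shortcut, which is exactly the novelty over the $(1+1)$ analysis of~\cite{DoerrSW13foga} flagged in the discussion preceding this lemma; I would control this with the regularity invariant (which caps leading ones at $d$ and keeps tails genuinely random) plus a union bound over the at most $2^k$ population members.
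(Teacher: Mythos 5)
Your proposal misses the two ideas that carry the paper's proof, and the arguments you substitute for them would not go through. First, for preserving regularity of the later blocks: you describe mutation as touching block $B_c$ only a $1/k$ fraction of the time with bit-flip probability $1/\ell$, but this lemma is about \gsemo, which flips every bit of every block with probability $1/n$ in every iteration; and the danger is not a single mutation flipping $\Omega(\ell)$ bits, but the gradual upward drift of the leading-ones value of a later block. Whenever an offspring is accepted because the currently optimized block improved (or earlier blocks are unchanged and the later block itself improved, Lemma~\ref{lem:fbasic}, case~1), any gain of leading ones in block $B_c$ hitchhikes along; this happens at rate up to $1/n$ per iteration, and over the $\Theta(n\ell)$ parent-iterations of one phase it would accumulate $\Theta(\ell)$ leading ones if unchecked. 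The paper neutralizes this with a \emph{reset} argument: with probability $\Omega(\ell/n)$ per $2n$-iteration interval, an accepted step re-creates a zero in the first $\ell/2$ positions of $B_c$, after which the higher bits of that block are neutral and get resampled $\Omega(n)$ times, making each one with probability at most $q=0.85$ again; between resets the leading-ones gain is only $O((n/\ell)\log n)\cdot(1+\Geom(q))$-type, hence $o(\ell)$. Your "replacements are driven by the first block of disagreement, so $B_c$ is only modified by its own rare mutations" is simply false for \gsemo, and nothing in your plan bounds the drift. (The paper also needs the auxiliary device of a "fake" algorithm that discards a second individual of type $1^{b\ell}$, justified a posteriori because later blocks never reach $\ell-r$ leading ones; you have no substitute for this.)

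Second, for producing the new individual of type $1^{(b+1)\ell-r}0^{r/2}$ by time $t_1$ (your part (c)): your mechanism — that the $r/2$ zeroes are already sitting at the right positions because the block is regular — gives success probability only about $0.15^{r/2}=n^{-\Theta(\gamma)}$, since the zeroes must occupy exactly positions $\ell-r+1,\dots,\ell-r/2$ of block $B_{b+1}$, not form a run somewhere in the tail; with $r\ge\gamma\ln n$ this fails with overwhelming probability, so you cannot reach the required $1-O(n^{-2})$ guarantee. The paper's argument is different: once an individual of type $1^{(b+1)\ell-r}$ exists, the decision bit at position $\ell-r+1$ splits the process into two incomparable branches, one collecting ones (towards $z^1$) and one collecting zeroes (towards $z^2$) in the last $r$ positions of the block; each is generated from the other with probability $\Theta(1/n)$, both advance at rate $\Theta(1/n)$, and since $r\ge\gamma\ln n$ a Chernoff-type race argument shows that, with probability $1-O(n^{-2})$, the $z^2$-branch has at least $r/2$ zeroes by the time the $z^1$-branch completes, i.e.\ at $t_1$. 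Your parts on the $r/2$-Hamming-separation preventing overwriting from other parents and on the survival of the existing $1^{i\ell-r}0^{r/2}$ individuals are in the spirit of the paper (the latter is even simpler there: such individuals can only ever be replaced by individuals of the same type), but without the reset argument and the two-branch race your proof cannot be completed.
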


\begin{proof}
  We first argue that with probability at least $1-O(n^{-2})$, no parent different from an individual of type $1^{b\ell}$ will generate an offspring that of this type in the next $n^2 2^k$ iterations. To this aim, we note that the property we initially assumed, that individuals of type $1^{i\ell-r} 0^{r/2}$ are present, remains true for the remaining run of the algorithm, simply because such individuals can only be replaced by others of this type. From this, we conclude that any individual in the population that is not of type $1^{b\ell}$ differs from any such individual in at least $r/2$ of the first $b\ell$ bits. 
  Consequently, the probability that such a parent generates a $1^{b\ell}$ offspring in one iteration, is at most $n^{-r/2} \le n^{-4} 2^k$ by our assumption on~$r$. Hence the probability that this happens in the next $n^2 2^k$ iterations, is at most $n^{-2}$. Since, as we will see shortly, the time interval analyzed in this lemma spans at most $O(\ell n)$ iterations with $x$ as parent (with probability $1- O(n^{-2})$), hence $O(\ell n 2^k)$ with any parent with this probability, we can assume in the remainder that never an individual of type $1^{b\ell}$ is generated from a parent of a different type.
 
  We start by analyzing the time interval until for the first time an individual with type $1^{(b+1)\ell-r}$ is in the population. We first analyze a fake version of our algorithm which never adds a second individual of type $1^{b\ell}$ to the population (hence when such an individual, incomparable with the existing one, is generated, it is just discarded instead of added to the population). We shall later see that this fake algorithm with high probability behaves identical to the true algorithm, so our insights extend to the true algorithm.

  In the fake algorithm, up to the point point where for the first time an individual of type $1^{(b+1)\ell-r}$ enters the population, $x$ is the only individual of type $1^{b\ell}$ in the population and  stays in the population unless replaced by its own offspring. Hence we can analyze the development of $x$ independent from all other individuals. To ease the language, we shall always denote the individual of type $1^{b\ell}$ by $x$. Let us also count time starting at $t=0$ and let us count only iterations in which $x$ is chosen as parent and in which no leading one of $x$ is flipped (as these will surely give an offspring that is not accepted). 

  We first note that each (such) iteration has a probability of exactly $\frac 1n$ of increasing the number of leading ones in~$x$. Hence the expected time to finish this phase is at most $n(\ell-r) \le n\ell$ and, by a simple Chernoff bound, this phase ends after at most $2n\ell$ iterations with probability at least $1 - \exp(\Theta(-\ell))$. Let us assume this in the remainder.

  Let us now regard a fixed block $x_B$ higher than $b$. We first note that with probability at least $(1 - \frac 1n)^n = (1+o(1)) \frac 1e$, the offspring of $x$ is accepted. Hence with probability at least $1 - 2n\ell \exp(-\Omega(n)) = 1 - \exp(-\Omega(n))$ in each interval of $n$ consecutive iterations in this phase, there are at least $\frac{n}{2e}$ iterations accepting the offspring. Let us also assume this.

  Let us partition this phase into consecutive intervals $I_1, I_2, \dots$ of exactly $2n$ iterations (plus some leftover of at most $2n-1$ iterations). Note that $L \le \ell$ by the assumptions made above. With probability at least 
  \begin{align*}
  (1-(1- &\tfrac 1n)^n)(1 - (1-\tfrac 1n)^{\ell/2}) (1 - \tfrac 1n)^n \\
  &\ge (1 - \tfrac 1e) \tfrac{\ell}{2en} (1-o(1)) \tfrac 1e \\
  &= (1-o(1)) \tfrac{1}{2e^2} (1 - \tfrac 1e) \tfrac{\ell}{n},
  \end{align*}
  in one of the first $n$ iterations of this interval, let us call it $t_u$,  
  \begin{enumerate}
  \item we flip the first zero of $x$ (implying that the offspring is accepted),
  \item we flip at least one of the first $\ell/2$ bits of $x_B$ should they all be one, or we leave one such bit at zero should such a bit exist; let us call the first zero bit in this block after this iteration $x_{u}$,
  \item and we do not flip $x_u$ in the next $n$ iterations (which still belong to this interval of $2n$ iterations).
  \end{enumerate} 
  Let us call this event a \emph{reset}, and take note that we have just shown that in each time interval $I_j$, $j = 1, \dots$, at most $\ell$ in number, independently (by definition of the reset), with probability at least $(1-o(1)) \frac{1}{2e^2} (1 - \frac 1e) \frac{\ell}{n}$ a reset happens. Consequently, when $C$ is a sufficiently large constant, with probability at least $1 - n^{-2}$ in each consecutive set of $C\frac n\ell \log n$ intervals, a reset happens. In terms of iterations, this means that with this probability, in each interval $[iL+1..(i+1)L]$ of iterations, $L := 2C\frac{n^2}{\ell} \ln(n)$, a reset happens. We study the effect of a single reset and then how these frequent resets influence $x_B$.
  
  In case of a reset, all bits higher than $x_u$ in this block are neutral in iterations $t_u+1, \dots, t_u+n$. Since at least $\frac n{2e}$ of these iterations accept the mutant by our earlier assumption, each of these neutral bits for at least $\frac{n}{2e}$ times is flipped independently with probability $\frac 1n$. By Lemma~3 of \cite{LassigS10}, this results in each of these bits, independently, being one with probability at most $\frac 12 + \frac 12 (1- \frac 2n)^{n/2e} \le \frac 12 (1 + \exp(-1/e)) \le 0.85 = q$. 

  Let us call a block $R$-regular if it contains a zero in the first $R$ bits and if all bits higher that $R+1$ are one with probability at most~$q$, independently. Hence in the notation above, a reset at time $t_u$ creates a $u$-regular block at time $t_u+n$, and this is also an $\frac \ell 2$-regular block. 
  
  We have shown above that (i)~after the start of the phase and  (ii)~after each reset we have at most $2L$ iterations without a reset (note that this argument is valid including the last few leftover iterations not fitting into a full interval $I_{j}$). 
  Consider for the moment any interval of $2L$ iterations (independent from the reset logic just set up). In each of these iterations, the probability that the number of leading ones in this block increases, is at most $\frac 1n$. Hence we expect at most $2\frac{L}{n} = 4C\frac{n}{\ell} \ln(n)$ such increases. Since bits above the first zero are one with probability at most $q$, the number of leading ones gained in such an improvement is $1+\Geom(q)$, where $\Geom(q)$ denotes a random variable, independent from all others, following a geometric law with success rate $q$. Consequently, the expected total gain of the number of leading ones in all these improvements is at most  $(1+\frac 1q) 4C\frac{n}{\ell} \ln(n)$ and a Chernoff bound for geometric random variables gives that with probability at least $1 - \exp(-\Omega(\frac{n}{\ell} \ln(n)))$, the total gains is bounded by  $(1+\frac 1q) 8C\frac{n}{\ell} \ln(n) = O(\frac n\ell \log n)$. By assumption, this gain is $o(\ell)$, which shows that up to the first reset the block is $r$-regular for some $r = (0.7+o(1))\ell$, from then on it is $r$-regular for some $r = (0.5+o(1)) \ell$. By taking a union bound over the at most $k$ later block, we extend this insight from one block to all later blocks, with probability at least $1 - O(kn^{-2})$.
  
  From what we have just shown, in particular, we observe that we never see that the number of leading ones in a later block reaches $\ell-r$, which would be necessary to generate a second individual of type $1^{b\ell}$ in the population. Hence with probability at least $1 - O(kn^{-2})$, also in the true algorithm we never see a second individual of this type, and thus our results above extend to the true algorithm. 

  We finally analyze the time until we have developed the individual of type $1^{(b+1)\ell-r}$ into one of type $1^{(b+1)\ell}$. We sketch this part as it mostly reuses arguments seen before. The difference is that now we will have two individuals of type $1^{(b+1)\ell-r}$, one collecting ones and the other collecting zeroes in the last $r$ bit positions. Both develop at a comparable speed, increasing the number of correct bits by an expected value of $\Theta(1/n)$ when counting only the iterations regarding these two individuals. Also generating the second from the first has a probability of $\Theta(1/n)$. Since $r \ge \gamma \ln n$ for some sufficiently large constant~$\gamma$, the probability that one individual is significantly faster than the other is $O(n^{-2})$. Since this phase is short, $O(nr)$ iterations with probability $1 - O(n^{-2})$, also the later blocks with high probability do not gain more than $o(\ell)$ leading ones.

  This proves the claim.
\end{proof}

\begin{proof}[Proof of Theorem~\ref{thm:lower}]
Consider a run of the GSEMO on our benchmark. Trivially, at the start of the run the conditions of Lemma~\ref{lem:technical} are satisfied for $b=0$. Also, if at some time $t_0$ the conditions of this lemma are satisfied for some $b \le k-1$, then the lemma asserts that at some later time $t_1$, they are satisfied for $b+1$, apart from a failure probability of $O(k/n^2)$. Hence by induction, using a union bound for the failure probabilities, we see that with  probability at least $1 - O(k^2/n^2)$, at the moment $t$ when an individual $y$ of type $1^{(k-1)\ell}$ enters the population, this individual has the last $0.3 \ell-1$ bits independently distributed with a probability of at most $q=0.85$ of being one. Also, at this time for each $i \in [1..k-1]$ the population contains an individual of type $1^{i\ell - r} 0^{r/2}$. As in the proof of Lemma~\ref{lem:technical}, this implies that the population contains no individual $z \neq y$ such that $z$ and $y$ differ in fewer than $r/2$ of their first $n-\ell$ bits. 

Let $S$ be the set of types of length $(k-1)\ell$ that can be extended to a Pareto optimal individual, that is, 
\begin{align*}
S = \{x \in &\{0,1\}^{(k-1)\ell} \mid \forall b \in [1..k-1] :  \\
&x_{(b-1)\ell+1} = \dots = x_{b\ell-r} = 1 
\wedge x_{b\ell-r+1} = \dots = x_{b\ell}\}.
\end{align*}
We note that the GSEMO treats the different types in $S$ in a symmetric fashion (see~\cite{Doerr21symmetry} for a formal treatment of symmetry arguments in the analysis of evolutionary algorithms) and that the types appear one after the other in the population (since only one offspring is generated per iteration). Consequently, the position of type $1^{(k-1)\ell}$ in the order of appearance of all types in $S$ is uniformly distributed in $[1..|S|]$. It thus appears, with probability at least $1/2$, on position $(|S|+1)/2$ or higher. Hence with probability at least $1/2$, the population size from this time on is at least $(|S|+1)/2 \ge 2^{k-2} =: m$.

Let us condition on this event. We analyze how the individual $y$ with type $1^{(k-1)\ell}$ develops into a solution of type $1^{n-r}$. 
We note first that the probability $Q$ that in one iteration a parent with type different from $1^{(k-1)\ell}$ generates an offspring of type $1^{(k-1)\ell}$ is at most $Q \le n^{-r/2}$. 
Hence the probability that this happens in the next $T_0 := 0.05\ell m n$ iterations is at most $Q T_0 \le n^{-r/2} 0.05 \ell 2^k n \le 2^k n^{2-r/2}$. 
Since $r \ge 8+2k/\log_2(n)$, this is at most $n^{-2}$. 

Ignoring such events for the moment, we see that the only way towards having a solution of type $1^{n-r}$ in the population is to select the single individual of type $1^{(k-1)\ell}$ as parent and mutate it. 
Apart from the waiting time for selecting the parent, this process up to reaching the type $1^{k\ell-r}$ is identical to the optimization of the \leadingones benchmark, starting with an individuals with $(k-1)\ell$ leading ones, then $0.7\ell+1$ bits which we do not specify further, and then $0.3\ell-1$ bits which are, independently, one with probability~$q$ only. 
We shall not regard for our lower bound the time it takes to optimize the first $0.7 \ell+1$ bits in the last block. 
From that point on, the bits are one independently with some probability that is at most~$q$. 
Hence the $T$ time to reach an individual of type $1^{k\ell-r}$ stochastically dominates the sum of $0.3\ell-1-r$ independent random variables, each being a geometric distribution with success rate $\frac 1{mn}$ (with probability $1-q$) or being constant zero (with the remaining probability of $q$), see~\cite{Doerr19tcs} for more details on this argument).
Hence $T$ has an expectation of at least $(0.3\ell-1-r) (1-q) p n = (0.045-o(1)) \ell p n$ and with high probability (use a Chernoff bound for sums of independent geometric random variables, e.g., Theorem~1.10.32 in~\cite{Doerr20bookchapter}), $T$ is at least $0.002 \ell p n$. 

In summary, we see that with probability at least $1/2 - o(1)$, and this includes all small failure probabilities encountered along the proof, we need at least $T_0 = 0.002 \ell p n = \Omega(2^k \ell n)$ iterations to generate a solution with type $1^{n-r}$, and hence also to generate the Pareto-optimal solution $1^n$, which is needed to witness the Pareto front. This implies an $\Omega(2^k \ell n)$ bound on the expected runtime. 
\end{proof}

\subsection{Upper Bound for the BC-GSEMO}

We now prove an upper bound on the runtime of the \bcgsemo algorithm which is lower than the lower bound shown for \gsemo in the previous section.
The fraction of mutation steps that \bcgsemo spends on a particular block $b$ is roughly $1/k$ if the parameter $\temax$ is small compared to the overall runtime. We assume that the maximal number of consecutive steps $\temax$ that a block is chosen for mutation is small enough such that each block is mutated frequently enough during the phases considered in the upcoming proofs.
For example, any $\temax \leq \sqrt{n}$ meets our requirement.
We analyze the algorithm using different phases of $T$ steps. When doing so, we  always assume without loss of generality that $T/(k \cdot \temax)$ is a positive integer in order to ease the presentation. This implies that mutation is applied to each block exactly $T/k$ times.

We first show that \bcgsemo quickly obtains for each block a sufficiently large number of leading ones. The key argument is that blocks are optimized in parallel and during this process and that \bcgsemo works with a population size of $1$.
\begin{lemma}
\label{lem:up}
Let $r< \sqrt{\ell \log \ell}$.
Then with probability $1-o(1)$, \bcgsemo has obtained after  $cn(\ell -\sqrt{\ell \log \ell})=O(n \ell)$ steps, $c$ an appropriate constant,  a population of size $1$ where for the individual $x \in P$, $LO_{z_1}(x_{B_i}) \in [\ell - 2\sqrt{\ell \log \ell}, \ell - \sqrt{\ell \log \ell}]$, $1 \leq i \leq k$, holds.
\end{lemma}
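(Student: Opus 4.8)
The plan is to show that throughout the considered phase \bcgsemo behaves like $k$ essentially independent runs of classical \leadingones, one per block, each fed a $1/k$ fraction of the mutation steps, and then to invoke the known \leadingones runtime and concentration results. The first thing to establish is that the population never leaves size~$1$ during the phase. The key observation is that as long as no block $x_{B_b}$ of the current individual has reached $\ell-r$ leading ones with respect to $z^1=1^\ell$, the first position where $x_{B_b}$ disagrees with $1^\ell$ lies among the first $\ell-r$ coordinates and is therefore also the first disagreement with $z^2=1^{\ell-r}0^r$; hence $LO_{z^1}(x_{B_b})=LO_{z^2}(x_{B_b})$ and $f_1^B(x_{B_b})=f_2^B(x_{B_b})=(\ell+2)\,LO_{z^1}(x_{B_b})$ on every block. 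By Lemma~\ref{lem:f}, two search points with distinct $f$-value are then comparable, the one with more leading ones in the first differing block strictly dominating, while two search points with equal $f$-value agree in leading ones on every block. Consequently an offspring produced from the single population member is either dominated (rejected) or weakly dominates its parent (and replaces it): $|P|$ stays~$1$, and an accepted step never decreases any block's leading-ones count. I would carry the invariant ``$|P|=1$ and every block has at most $\ell-\sqrt{\ell\log\ell}<\ell-r$ leading ones'' along the run by induction, the inductive step being supplied by the overshoot bound below.

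Under this invariant, in a phase of $T$ steps each block is mutated exactly $T/k$ times (assuming, as the authors do, that $T/(k\temax)$ is an integer), and within a block an accepted step is precisely a \leadingones step: flipping the first $0$ and no earlier bit strictly increases the block's base fitness, hence the individual's fitness by Lemma~\ref{lem:f}, and is accepted; flipping a leading one yields a dominated offspring and is rejected; flipping only later bits keeps the fitness and is accepted. The non-leading bits of each block perform an essentially unbiased random walk and hence stay close to uniform, controlled by Lemma~3 of~\cite{LassigS10}. So, up to lower-order effects, each block evolves like an independent run of \leadingones on $\ell$ bits from a random start, fed $T/k$ steps.

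For a single block, the number of mutations needed to raise its leading-ones count to a level $a\le\ell$ is a sum of $\Theta(\ell)$ independent geometric variables with means in $[\ell,(1+o(1))e\ell]$; by the classical \leadingones analysis (see~\cite{Doerr19tcs}) it has expectation $\Theta(\ell^2)$ and standard deviation $O(\ell^{3/2})$, and a Chernoff bound for sums of geometric random variables (e.g.\ Theorem~1.10.32 in~\cite{Doerr20bookchapter}) concentrates it around its mean. The times to reach levels $\ell-2\sqrt{\ell\log\ell}$ and $\ell-\sqrt{\ell\log\ell}$ have means that differ by $\Theta(\ell^{3/2}\sqrt{\log\ell})$, that is, by $\Theta(\sqrt{\log\ell})$ standard deviations. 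I would pick $c$ so that $cn(\ell-\sqrt{\ell\log\ell})/k$ block mutations falls near the middle of this gap; then, provided the concentration is tight enough, with probability $1-\ell^{-\Omega(1)}$ the block's leading-ones count at the end of the phase lies strictly between $\ell-2\sqrt{\ell\log\ell}$ and $\ell-\sqrt{\ell\log\ell}$. For the upper end one additionally uses that a single improving step raises the count by $1+\Geom(q)$ for some $q\le 0.85$, so a jump larger than $\sqrt{\ell\log\ell}$ has probability $2^{-\Omega(\sqrt{\ell\log\ell})}$; a union bound over the $\poly(\ell)$ improving steps rules this out, which in particular shows that no block ever reaches $\ell-r$ leading ones and thereby closes the invariant of the first paragraph. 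A final union bound over the $k=o(\ell)$ blocks keeps the total failure probability $o(1)$.

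The main obstacle is the two-sided, time-precise nature of the statement: one must certify that at one prescribed moment the leading-ones counts of all $k$ blocks simultaneously fall into a window of width only $\sqrt{\ell\log\ell}$, that is, only $\Theta(\sqrt{\log\ell})$ standard deviations wide. Plain additive-drift arguments do not deliver this; what is needed is the sharp concentration of \leadingones fitness at a fixed budget, together with a careful calibration of the constant $c$ against the first-order expansion of the \leadingones progress rate. The remaining technical nuisance — controlling the mild non-uniformity of the non-leading bits, both to lower-bound progress and to forbid large jumps — is exactly the difficulty already handled via Lemma~3 of~\cite{LassigS10} in the lower-bound proof, so the same tool transfers.
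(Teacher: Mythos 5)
Your route is structurally the same as the paper's: (i)~as long as every block of the current individual has fewer than $\ell-r$ leading ones, the two objectives coincide on all relevant points, so any offspring is comparable to its parent and the population stays a singleton; (ii)~the mutated block then evolves exactly like a \oea on \leadingones with mutation rate $1/\ell$, receiving a $1/k$ fraction of the budget; (iii)~a concentration statement for the \leadingones value at a fixed budget places every block in the prescribed window, followed by a union bound over the $k$ blocks. The only difference lies in step~(iii): the paper simply cites the fixed-budget result (Theorem~9 of~\cite{DoerrJWZ13}), whereas you re-derive it from concentration of the level-hitting times (Chernoff bounds for sums of geometric random variables plus control of free-riders). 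Since the cited theorem is itself obtained by exactly this translation from hitting-time concentration, this is not a genuinely different argument but a self-contained version of it; on the other hand, your justification of the size-one invariant via Lemmas~\ref{lem:fbasic} and~\ref{lem:f} is more explicit than the single sentence the paper devotes to it, and is correct.

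One caveat, at precisely the point you single out as the main obstacle. With the budget in the exact form $cn(\ell-\sqrt{\ell\log\ell})$, i.e.\ $c\ell^2(1-\sqrt{\log\ell/\ell})$ mutations per block, ``choosing $c$ in the middle of the gap'' is not actually available: matching the leading order of the hitting times forces $c=(e-1)/2$, and then the typical \leadingones value after the budget is about $\ell\ln\bigl(1+2c-2c\sqrt{\log\ell/\ell}\bigr)\approx \ell-(1-1/e)\sqrt{\ell\log\ell}$, which lies \emph{above} the upper end $\ell-\sqrt{\ell\log\ell}$ of the stated window, while any other constant $c$ misses the window by $\Theta(\ell)$. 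So your calibration step cannot be completed literally; this is a constants problem of the lemma's budget/window pair itself (e.g.\ a budget $cn(\ell-2\sqrt{\ell\log\ell})$ with $c=(e-1)/2$, or an upper window end $\ell-\tfrac12\sqrt{\ell\log\ell}$ together with $r<\tfrac12\sqrt{\ell\log\ell}$, would repair it), and the paper's own proof conceals exactly the same issue behind the citation of~\cite{DoerrJWZ13}, so it is not a defect specific to your route --- but carrying out your explicit computation would force the adjustment. Finally, make the union bound quantitative: your per-block failure probability $\ell^{-\Omega(1)}$ has an exponent fixed by the window width measured in standard deviations, and it must exceed $\log k/\log\ell$ (in the regime of Theorem~\ref{thm:up} one has $k\le\sqrt{\ell}$); the paper is equally silent on this point.
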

\begin{proof}
We consider the first phase consisting of $T=cn(\ell -\sqrt{\ell \log \ell})$ steps, $c$ an appropriate constant, which implies that $c \ell(\ell -\sqrt{\ell \log \ell})$ mutations are applied to each block $i$.
We show that each block consists of $\ell - \Theta(\sqrt{\ell \log \ell})$ leading one bits after these $T$ steps.
If a block $i$ is selected for mutation, then we have a classical leading ones problem on $\ell$ bits with mutation probability $1/\ell$.
We have $r< \sqrt{\ell \log \ell}$ which implies that as long as no block has at least $\ell+1-\sqrt{\ell \log \ell}$ leading ones, the population size is $1$. Each block develops independently of the others and the probability that there is a block $b$ for which the single solution $x \in P$ obtained after $T$ steps  does not have the property $LO_{z_1}(x_{B_i}) \in [\ell - 2\sqrt{\ell \log \ell}, \ell - \sqrt{\ell \log \ell}]$ after $c\ell(\ell -\sqrt{\ell \log \ell})$  mutations to block $b$ is $o(1)$ using Theorem 9 in~\cite{DoerrJWZ13}. 

Hence, after $T=cn(\ell -\sqrt{\ell \log \ell})$ steps the population $P$ contains a single solution $x$ for which we have $LO_{z_1}(x_{B_i}) \in [\ell - 2\sqrt{\ell \log \ell}, \ell - 2\sqrt{\ell \log \ell}]$ for all $i \in \{1, \ldots, k\}$ with probability $1-o(1)$.
\end{proof}

The previous lemma showed that the population does not grow for a significant phase of the optimization process and that the different blocks have obtained already a large number of leading ones. As \bcgsemo does not destroy the progress made in any of the blocks we can show the following upper bound by taking into account the remaining work that needs to be done on each block and the maximum population size of $2^k$.

\begin{theorem}
\label{thm:up}
Let $r< \sqrt{\ell \log \ell}$ and $k \leq n^{1/3}$. Then with probability $1-o(1)$, the optimization time of \bcgsemo on $f_{nkr}$ is $O(2^k n \sqrt{\ell \log \ell})$. 
\end{theorem}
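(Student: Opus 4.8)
The plan is to build on Lemma~\ref{lem:up}, which after $O(n\ell)$ steps gives (with probability $1-o(1)$) a population of size~$1$ containing a single individual~$x$ with $LO_{z^1}(x_{B_i}) \in [\ell - 2\sqrt{\ell\log\ell}, \ell - \sqrt{\ell\log\ell}]$ for every block~$i$. From this starting point I would analyze a second phase of $T = O(2^k n \sqrt{\ell\log\ell})$ steps and show that within it the \bcgsemo discovers, for every block, both of the two "cooperative" completions of the block (namely the continuation that leads to $LO_{z^1}$ maximal and the one that leads to $LO_{z^2}$ maximal), and that the population accumulates all $2^k$ combinations of these block-wise choices, i.e.\ the full Pareto front of size~$2^k$ (Lemma~\ref{lem:popsize}).

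The key structural observation driving this is that \bcgsemo never destroys progress in a block other than the one currently being mutated, since block-coordinate mutation only touches the active block; combined with the exponentially decreasing weights in~\eqref{eq:fsumbasic} and Lemma~\ref{lem:f}, an offspring that strictly improves the active block (in the $\preceq$-sense) is always accepted, and an offspring that moves to an incomparable cooperative completion of the active block enlarges the population rather than shrinking it. So I would argue block by block: fix a block~$b$; since the parameter $\temax \le \sqrt n$ is small relative to $T$, block~$b$ is selected for mutation $T/k = \Theta(2^k n \sqrt{\ell\log\ell})$ times during the phase. Each time the chosen parent is one of at most $2^k$ population members, so the "effective" number of mutation steps acting on a given individual's block~$b$ is $\Theta(T/(k 2^k)) = \Theta(n\sqrt{\ell\log\ell})$. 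Starting from $\ell - \Theta(\sqrt{\ell\log\ell})$ leading ones, completing the $\Theta(\sqrt{\ell\log\ell})$ remaining bits of block~$b$ along either target string is a \leadingones-type task that finishes in $O(n\sqrt{\ell\log\ell})$ relevant mutation steps in expectation, and with the $r < \sqrt{\ell\log\ell}$ assumption the split between the two completions $z^1$ and $z^2$ happens within the uncertain suffix, so both completions are reachable. A union bound over the at most $2^k$ individuals, the $k$ blocks, and the (at most $2^k$) branchings that populate the front gives the claimed $1-o(1)$ success probability, using that $k \le n^{1/3}$ keeps $2^k$ and polynomial factors under control so that all individual failure probabilities are $o(2^{-k} k^{-1})$.

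The main obstacle I expect is bookkeeping the population dynamics cleanly: unlike the single-individual phase of Lemma~\ref{lem:up}, here the population is growing, different members are at different stages in their blocks, and a member can be overwritten by an offspring of another parent. I would handle this by a potential/level argument over the $2^k$ target types of Lemma~\ref{lem:popsize}: show that once a type is present it is never lost (it can only be replaced by a weakly dominating solution of the same or better block-profile, and because the weights separate blocks, eventually by one agreeing on all blocks' leading structure), and bound the expected time for each not-yet-present type to be created from an already-present "neighbouring" type by flipping the decisive block, which again costs $O(n\sqrt{\ell\log\ell})$ relevant steps with the $2^k$ selection penalty. Summing a geometric-tail concentration (e.g.\ a Chernoff bound for sums of independent geometric variables, as used in the proof of Theorem~\ref{thm:lower}) over the $O(2^k)$ such creation events yields the $O(2^k n \sqrt{\ell\log\ell})$ total with probability $1-o(1)$, completing the proof.
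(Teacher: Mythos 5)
Your overall strategy coincides with the paper's: phase one via Lemma~\ref{lem:up}, then a second phase in which the $2^k$ Pareto-optimal types are assembled block by block, using that block-coordinate mutation never destroys other blocks, that the population never exceeds $2^k$ (Lemma~\ref{lem:popsize}), that the targeted block of a targeted individual advances with probability at least $1/(2^k e \ell)$ per mutation step on that block, and Chernoff plus union bounds kept under control by $k \le n^{1/3}$. The paper organizes the second phase as $k$ \emph{lexicographic} sub-phases of length $cn2^k\sqrt{\ell\log\ell}$ each: the sub-phase for block $i$ begins only once all $2^{i-1}$ complete profiles on the earlier blocks are present, and within it the per-profile maxima $s^1,s^2$ of block $i$ are monotone non-decreasing and all profiles complete block $i$ \emph{in parallel} (union bound over the at most $2^k$ profiles). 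This ordering is not mere bookkeeping: as long as an earlier block is still improvable, an improvement there strongly dominates (decided at the first differing block, Lemma~\ref{lem:f}) and wipes out any branching diversity already accumulated in later blocks, so a ``all blocks progress simultaneously in one phase $T$'' reading of your argument would fail; creating a missing type only from an already fully present neighbouring type, as in your last paragraph, avoids this, and is in effect the paper's waiting condition in per-type form.

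The genuine gap is in your final time accounting. A single creation event (re-completing one block of one individual by $\Theta(\sqrt{\ell\log\ell})$ leading-bit gains) costs $\Theta(2^k n\sqrt{\ell\log\ell})$ global iterations once both the $1/k$ duty cycle and the $1/2^k$ parent-selection probability are charged, so summing such waiting times sequentially over the $O(2^k)$ creation events gives $O(4^k n\sqrt{\ell\log\ell})$, not the claimed bound. To get rid of the extra exponential factor you must argue, as the paper does, that within one block's sub-phase all $2^{i-1}$ profiles are completed simultaneously, so that the number of sub-phases is $k$ rather than $2^k$; your earlier union bound ``over individuals, blocks and branchings'' gestures at this, but your concluding summation does not use it. Two smaller slips point the same way: with $T = O(2^k n\sqrt{\ell\log\ell})$ one has $T/k = \Theta(2^k\ell\sqrt{\ell\log\ell})$ and $T/(k2^k) = \Theta(\ell\sqrt{\ell\log\ell})$, not the values you state (you drop a factor $k$). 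Be aware, finally, that the paper's own sequential sub-phase accounting yields $\sum_i T_i = cnk2^k\sqrt{\ell\log\ell}$, i.e., an extra factor $k$ over the bound stated in Theorem~\ref{thm:up}, so a proof of the bound exactly as stated has to be especially careful at precisely this parallelization step.
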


\begin{proof}

We already know that \bcgsemo obtains with probability $1-o(1)$ in time $O(n\ell)$ a population of size $1$ where each block has a leading ones value at least $\ell - 2\sqrt{\ell \log \ell}$. 
We consider the process after this population has been produced for the first time and investigate the remaining process.

In the second phase, the population size is at most $2^k$ and optimizing the remaining bits to agree with $z_1$ (or $z_2$) for each solution takes time $O(2^k n k \sqrt{\ell \log \ell})$ with high probability. We show this by optimizing the blocks in lexicographic order. 
Assume all blocks $b< i <k$ have already been fully optimized. Block $i$ has at least $\ell - 2\sqrt{\ell \log \ell}$ leading ones in each solution present in the population.

Consider a solution $x$ with profile $p_{i-1}(x)=(x_{B_1}, \ldots, x_{B_i}) \in \{z_1, z_2\}^{i-1}$ for the first $i-1$ blocks. 
Let $s^1=\max_{y \in P}\{ LO_{z_1}(x_{B_{i}}) \mid p_{i-1}(y) = p_{i-1}(x)\}$ and $s^2=\max_{y \in P}\{ LO_{z_2}(x_{B_{i}}) \mid p_{i-1}(y) = p_{i-1}(x)\}$. The values of $s^1$ and $s^2$ do not decrease and the probability of an increase for each of them is at least $1/(2^k e\ell)$ if block $i$ is selected for mutation.

For each value of $i$, $1 \leq i \leq k$, we consider a (sub)-phase of $T_i=cn 2^k \sqrt{\ell \log \ell}$ steps, $c$ an appropriate large enough constant, which starts when the population $P$ contains an individual for each profile $\{z_1, z_2\}^{i-1}$ for the first time.
A phase of $T_i$ steps implies that there are $c \ell 2^k \sqrt{\ell \log \ell}$ mutation steps applied to block $i$.
The expected number of mutation steps applied to each solution $x$ with profile $p_{i-1}(x) \in \{z_1, z_2\}^{i-1}$ and maximal $s^1$-value (or $s^2$-value) is at least $c \ell \sqrt{\ell \log \ell}$ and the expected number of increases of $s^1$ (or $s^2$) is at least $c'\sqrt{\ell \log \ell}$, $c'\geq 4$ a constant if $c$ is large enough. The probability that there are less than the sufficient  
$$2\sqrt{\ell \log \ell} \leq \frac{c'}{2} \cdot \sqrt{\ell \log \ell}$$ 
successful increases of $s^1$ (or $s^2$) is $2^{-\hat{c}\sqrt{\ell \log \ell}}$ using Chernoff bounds where $\hat{c}>1$ if $c$ is chosen large enough. 
Assuming $k\leq n^{1/3}$ which implies $\ell \geq n^{2/3}$ and using the union bound, the probability that there is a profile $p_{i-1}(x)$ for which $s^1=\ell$ (or $s^2 = \ell$) 
has not been achieved is at most 
$$2^{n^{1/3}} \cdot 2^{-\hat{c} \sqrt{n^{2/3}}}= 2^{n^{1/3}} \cdot 2^{-\hat{c} n^{1/3}} =2^{(1-\hat{c})n^{1/3}} = 2^{-\Omega(n^{1/3})}
$$ 
Considering all values of $i$, $1\leq i \leq k$, sequentially, the probability that there is a block $i$ which has not been fully optimized its phase of $T_i$ steps is $k\cdot 2^{-\Omega(n^{1/3})} = 2^{-\Omega(n^{1/3})}$. As there are $k$ blocks that need to be considered, an optimal population is reached after at most $\sum_{i=1}^k T_i= cnk 2^k \sqrt{\ell \log \ell}$ steps with probability $1-2^{-\Omega(n^{1/3})}$.
Taking into account the failure probability of $o(1)$ for the first phase given in Lemma~\ref{lem:up} completes the proof.
\end{proof}

\section{Experimental investigations}
\label{sec5}
We now complement our theoretical findings with an experimental study. In order to show that a significant difference can already been observed for a moderate number of blocks, we consider $k=2,3,4$ for values of $n= 24, 120,240, 360, 480, 600, 720, 840$. We consider problems with a value of $r=1,2,4$ and use $\temax=1000$ for \bcgsemo. We obtained similar results as shown in the following when using $\temax=1, 100$ in \bcgsemo. 

The results are shown as the mean number of function evaluations  required to reach the optimal population for $30$ independent runs. 
As there is a clear difference in the scaling behaviour of \gsemo and \bcgsemo for all experimental investigations carried out, we do not report statistical test results. (Note: the standard error on the mean is too small to be visible, so we omit to plot it.) 
\begin{figure}[!t]
\centering
\includegraphics[width=0.40\textwidth]{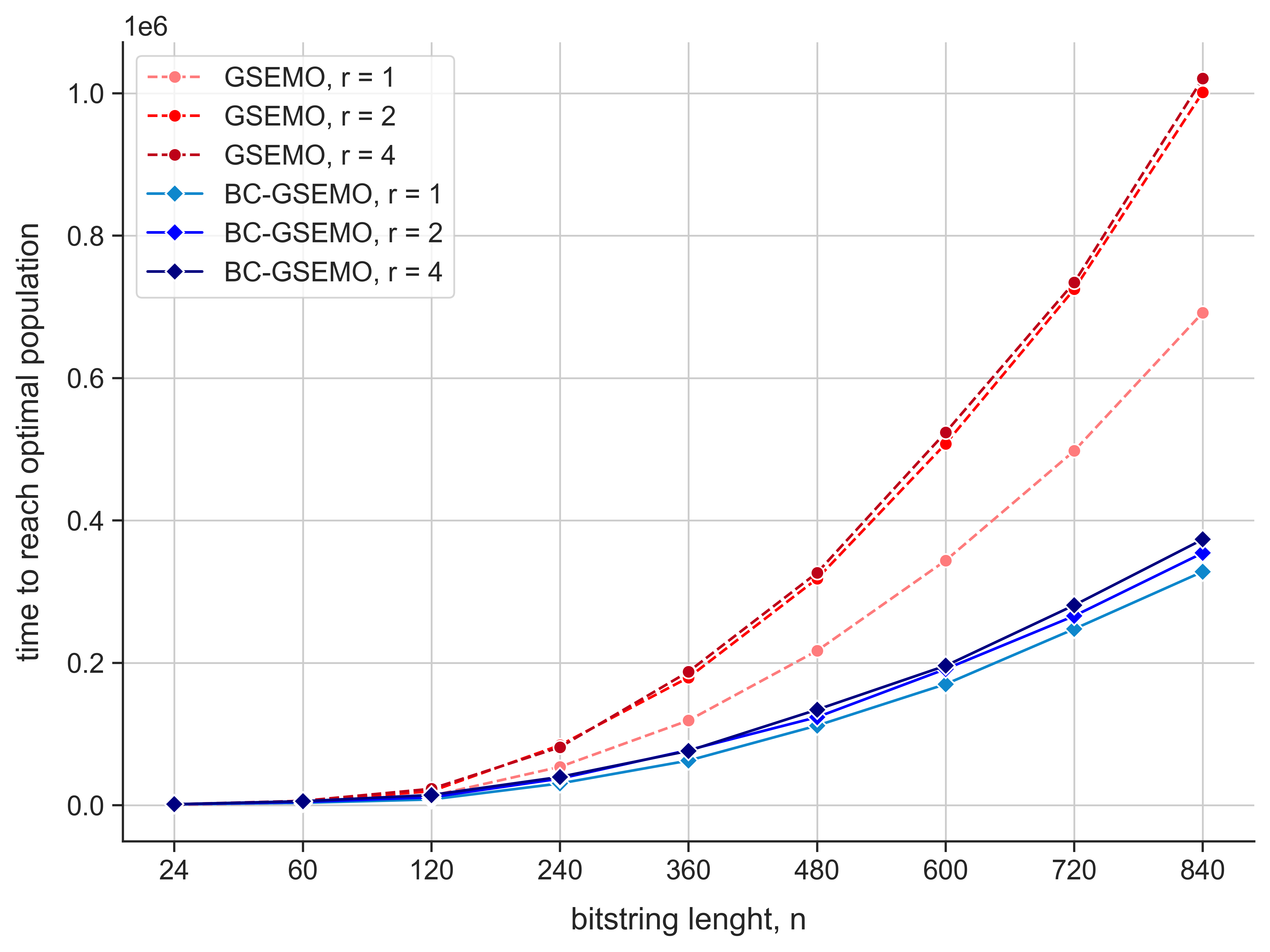}
\caption{Mean time to reach the optimal population by GSEMO and BC-GSEMO (with $\temax =1000$) for $k = 2$ dependent on the bitstring length $n$ for $r = 1, 2, 4$.}
\label{fig:k2t1000}
\end{figure}

\begin{figure}[!t]
\centering
\includegraphics[width=0.40\textwidth]{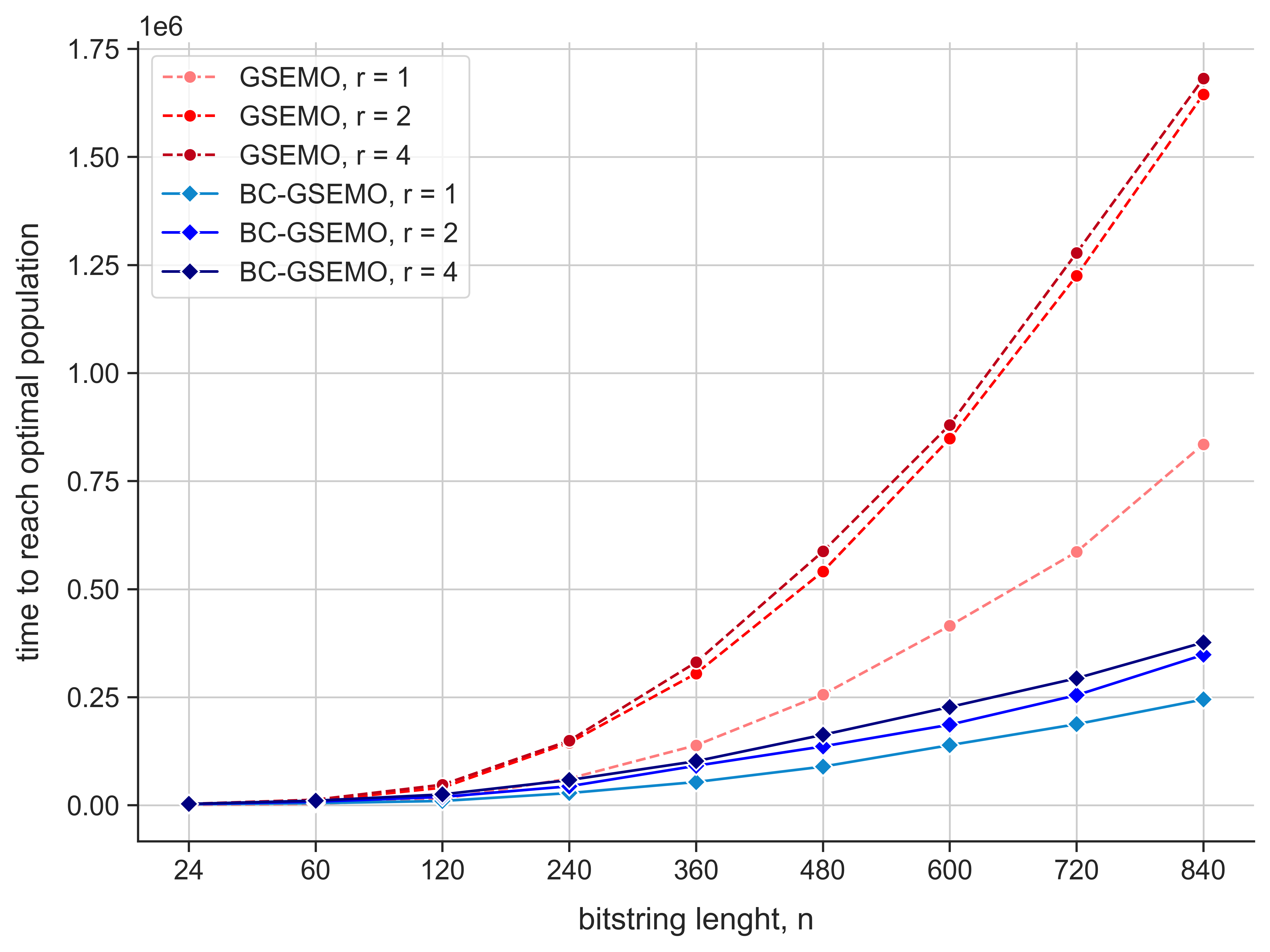}
\caption{Mean time to reach the optimal population by GSEMO and BC-GSEMO (with $\temax =1000$) for $k = 3$ dependent on the bitstring length $n$ for $r = 1, 2, 4$.}
\label{fig:k3t1000}
\end{figure}

\begin{figure}[!t]
\centering
\includegraphics[width=0.40\textwidth]{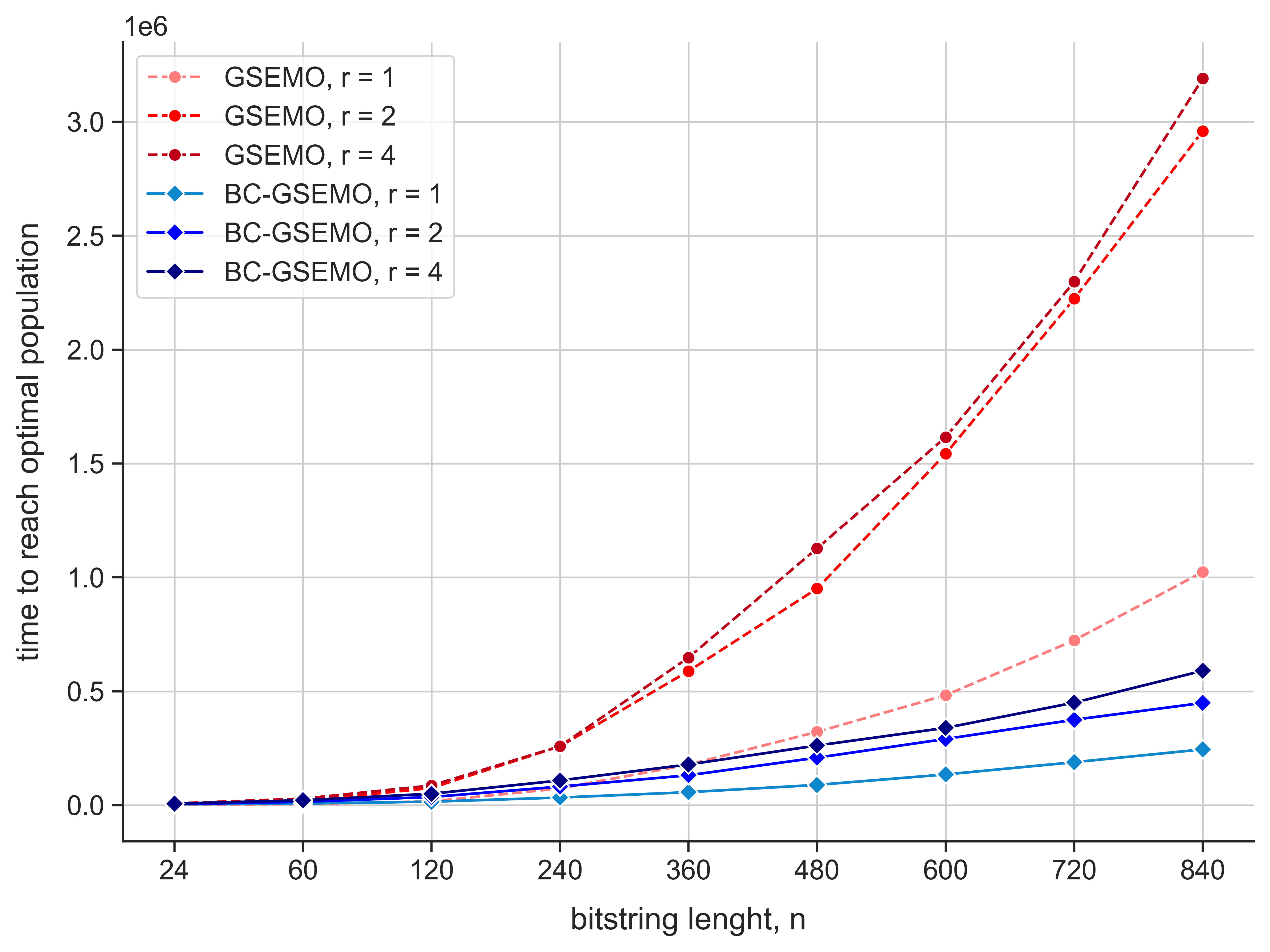}
\caption{Mean time to reach the optimal population by GSEMO and BC-GSEMO (with $\temax =1000$) for $k = 4$ dependent on the bitstring length $n$ for $r = 1, 2, 4$.}
\label{fig:k4t1000}
\end{figure}

The results for $k=2$ and $\temax=1000$ are shown in Figure~\ref{fig:k2t1000}. It can be observed that \bcgsemo needs much fewer function values than \gsemo to compute the optimal population. Looking at the results of \gsemo for different values of $r$, we can see that a larger value of $r$ results in a larger runtime.
Increasing $r$ from $1$ to $2$ for \gsemo, this increases the runtime significantly whereas the increase is rather minor when further increasing $r$ from $2$ to $4$.
This observed behaviour is expected from the problem definition and is also in line with our investigations for the lower bound of \gsemo, where we required that $r$ grows with $k$. The increase in runtime with increasing value of $r$ can also be observed for \bcgsemo.

Furthermore, the results for $k=3,4$ blocks and $\temax=1000$ are shown in Figure~\ref{fig:k3t1000} and Figure~\ref{fig:k4t1000}, respectively. We can observe a similar behavior in terms of runtime for both algorithms. \bcgsemo is consistently superior to \gsemo in terms of runtime, demonstrating its efficiency across various
settings. Again the runtime for \gsemo significantly increases when increasing the problem parameter $r$ from $1$ to $2$ whereas this difference is not that larger when increasing $r$ from $2$ to $4$. For \bcgsemo we can observe an increase in runtime when increasing $r$ which is not as drastic as for \gsemo. This increase is again larger when increasing $r$ from $1$ to $2$ than when increasing it further from $2$ to $4$.
Comparing the results for different values of $k$ across all three figures, we see that the runtimes are clearly increasing with $k$ across all other problem parameter settings with the increase being significantly larger for \gsemo than for \bcgsemo.

\section{Discussion and Conclusions}
Through an example problem, we have shown that incorporating block coordinate descent into evolutionary multi-objective algorithms is provably useful. Our result is based on the insight that block-coordinate descent allows for the parallel optimization of the underlying blocks without destroying other components. In practical problems, we may need methods for identifying the blocks automatically if the problem has no obvious natural decomposition. Estimation of distribution algorithms (EDAs) that use factorised distributions are candidates for identifying blocks. A particularly promising candidate is Deep Optimization~\cite{caldwell2022deep}, an EDA that identifies and constructs latent variables hierarchically, resulting in lower and lower dimensional spaces to search in. Such latent variables could be interpreted as blocks. Empirically, Deep Optimization exhibits a favourable scaling behaviour on multiple knapsack and other problems  due to its ability to search by combining modules without repeatedly destroying them. 

Of course, there remains a gap between the simple function and algorithms we have fully analysed here, and more sophisticated, practical algorithms like Deep Optimization which presently remain out of reach of theoretical running time analysis. Nevertheless, our result, as well as providing an existence proof for the efficiency of block-coordinate methods in a multi-objective setting, also suggests a lens by which to understand such algorithms, and points the way to uncovering more about what may make these kind of algorithms efficient in practice.

\section*{Acknowledgments}
The research has been initiated during Dagstuhl seminar 23361, ``Multiobjective Optimization on a Budget''.
Special thanks are due to the Working Group ``Decomposition'', which in addition to the authors included Kerstin D\"achert, Andrea Raith,  Anita Sch\"obel, and Margaret M.~Wiecek. Benjamin Doerr's research benefited from the support of the FMJH Program Gaspard Monge for optimization and operations research and their interactions with data science.
Frank Neumann has been supported by the Australian Research Council through grant FT200100536.

\end{document}